%% file: example_paper.tex
\documentclass{article}

\usepackage{microtype}
\usepackage{graphicx}
\usepackage{subcaption}
\usepackage{booktabs} 
\usepackage{pifont}
\usepackage{enumitem}
\usepackage{tabularx}
\usepackage{xcolor} 
\usepackage{multirow}
 
\newcolumntype{Y}{>{\centering\arraybackslash}X}
\newcolumntype{Z}{>{\centering\arraybackslash\hsize=.8\hsize}X}

\usepackage{placeins}
\usepackage[accepted]{icml2026} 

\usepackage{hyperref}

\usepackage{xspace}
\newcommand{\sysname}{RaBitQCache\xspace}

\usepackage{icml2026}

\renewcommand{\algorithmicrequire}{\textbf{Input:}}
\renewcommand{\algorithmicensure}{\textbf{Output:}}

\usepackage{amsmath}
\usepackage{amssymb}
\usepackage{mathtools}
\usepackage{amsthm}

\usepackage[capitalize,noabbrev]{cleveref}

\theoremstyle{plain}
\newtheorem{theorem}{Theorem}[section]

\newtheorem{lemma}[theorem]{Lemma}

\theoremstyle{definition}

\theoremstyle{remark}

\usepackage[textsize=tiny]{todonotes}

\icmltitlerunning{RaBitQCache: Rotated Binary Quantization for KVCache in Long Context LLM Inference}

\begin{document}

\twocolumn[
  \icmltitle{RaBitQCache: Rotated Binary Quantization for KVCache in Long Context LLM Inference}



  \icmlsetsymbol{equal}{*}

  \begin{icmlauthorlist}
    \icmlauthor{Wenhao Li}{yyy}
    \icmlauthor{Jinhao Dong}{yyy}
    \icmlauthor{Hailin Zhang}{comp}
    \icmlauthor{Wenhang Shi}{yyy}
    \icmlauthor{Wei Lu}{yyy}
    \icmlauthor{Xiaoyong Du}{yyy}
  \end{icmlauthorlist}

  \icmlaffiliation{yyy}{School of Information, Renmin University of China, Beijing, China}
  \icmlaffiliation{comp}{Peking University, Beijing, China}

  \icmlcorrespondingauthor{Jinhao Dong}{dongjinhao@ruc.edu.cn}

  \icmlkeywords{Machine Learning, ICML}

  \vskip 0.3in
]



\printAffiliationsAndNotice{}  

\begin{abstract}
Long-context Large Language Model inference is severely bottlenecked by the massive Key-Value (KV) cache, yet existing sparse attention methods often suffer from static fixed-budget (Top-k) retrieval or rely on proxy scores that are computationally expensive and biased. To address these limitations, we propose RaBitQCache, a novel sparse attention framework that utilizes randomized rotated binary quantization and high-throughput binary-INT4 arithmetic to efficiently estimate attention weights. Our proxy score serves as an unbiased estimator with a proven error bound, enabling adaptive Top-p retrieval that dynamically adjusts the token budget based on actual attention sparsity. We further implement a hardware-aware system with asynchronous pipelining and lazy updates to mask overhead. Evaluations demonstrate that RaBitQCache significantly accelerates inference and reduces memory I/O while preserving generation quality compared to state-of-the-art baselines. Code is available at \url{https://github.com/Sakuraaa0/RaBitQCache.git}.
\end{abstract}

\input{1_Introduction}
\input{2_Related_Work}

\input{4_RabitQCache_exec}

\input{5_Efficient_Implementation}

\input{6_Evaluation}

\bibliography{example_paper}
\bibliographystyle{icml2026}
\input{3_RabitQCache}
\input{x_Appendix}

\end{document}

%% file: 1_Introduction.tex
\section{Introduction}\label{sec:Introduction}

In recent years, Large Language Models (LLMs) have achieved revolutionary breakthroughs in the field of Natural Language Processing (NLP) and, more broadly, in artificial intelligence~\cite{achiam2023gpt}. They are widely utilized in various scenarios, such as video processing, software engineering, and databases~\cite{dosovitskiy2020image, fang2025vangogh, zhou2024db, hou2024large}. As application scenarios become more complex, the maximum context length that LLMs need to support has significantly increased, from the initial 2K-4K to today's 128K or even more~\cite{liu2025deepseek, fu2024data, yang2025qwen2}. While the context length continues to grow, the computational complexity of the core component of large models—the self-attention mechanism—has a quadratic relationship with the input sequence length, leading to significant latency overhead~\cite{zhang2023h2o, tang2024quest}. Furthermore, the massive Key-Value (KV) cache considerably increases GPU memory consumption, making it extremely difficult to continuously expand the context length of large models.



Previous studies have shown that the attention calculations in large models exhibit sparsity—only a few key tokens have a decisive impact on the result, while most tokens contribute minimally~\cite{miao2025towards, ge2023model}. A natural optimization approach is ``selective attention'', which involves using only a subset of tokens for computation. Leveraging this characteristic, existing methods are primarily divided into two categories: KVCache eviction~\cite{liu2023scissorhands, zhang2023h2o} and offloading~\cite{ribar2024sparq, zhang2025pqcache, tang2024quest}. Eviction methods save storage by discarding early, unimportant key-value pairs, but their assumption that ``once unimportant, always unimportant'' often does not hold in reality~\cite{dong2024get, yang2024no}. Offloading methods, on the other hand, retain the data completely and only load a portion of the KVCache for each calculation. In fact, as storage costs continue to decline, storage overhead is no longer a core constraint, making KVCache offloading the mainstream approach~\cite{lin2025twilight, wu2024retrieval}.

Currently, KV Cache offloading primarily adopts two selection strategies: static and dynamic. The static strategy loads tokens from fixed positions~\cite{xiao2023efficient}, which lacks adaptivity to input-dependent attention patterns. In contrast, dynamic methods select important tokens on the fly using low-cost proxy scores. However, existing dynamic approaches suffer from two major limitations.
\textbf{(1) Inaccurate Proxy Scores. }Existing techniques rely on different estimation methods to compute proxy scores, but these methods often fail to reflect true attention importance. For instance, Quest~\cite{tang2024quest} selects an entire page at once. Due to the highly discrete distribution of token importance~\cite{xiao2023smoothquant}, this leads to the retrieval of numerous irrelevant tokens, causing performance degradation. Similarly, DS~\cite{yang2024post} selects partial dimensions for estimation, which is prone to deviating from the true distribution.
\textbf{(2) Lack of Theoretical Error Bound. }Existing methods provide no theoretical guarantees for their proxy score estimations. This means that the relative values they estimate can only approximate the relative order and cannot accurately estimate the precise values. Consequently, most are limited to fixed token budget Top-$k$ methods and cannot support adaptive Top-$p$ retrieval. However, attention patterns vary significantly across layers and heads—some exhibit highly concentrated distributions, while others are substantially more diffuse~\cite{lin2025twilight, xiao2024duoattention, cai2024pyramidkv}. A rigid Top-$k$ constraint leads to either unnecessary computation or degraded accuracy, making it difficult to strike an optimal balance between efficiency and adaptivity in long-context inference. To achieve Top-$p$ retrieval, proxy scores need to accurately estimate the \textit{magnitude} of attention weights, not merely their relative order.

Our goal is to leverage the sparsity of large language models to achieve inference acceleration by screening key Tokens, while ensuring that model performance is not significantly affected. However, an ideal sparsification scheme must simultaneously satisfy the three core requirements: efficiency, high precision, and generalizability. \ding{182} The proxy score for evaluating token importance should avoid introducing excessive computational overhead, \ding{183} the screening mechanism must be precise enough to maintain model performance—ideally possessing a theoretical error bound to guarantee its stability, \ding{184} and this method needs to be able to adjust quickly and conveniently for different tasks to achieve the best results. Our key insight is that Tokens within the KV Cache are essentially high-dimensional vectors, and the proxy score for assessing their importance is, in fact, a dimensionality reduction operation performed to enable rapid estimation. This perspective naturally leads us to the classic Johnson-Lindenstrauss (JL) Lemma~\cite{johnson1984extensions}. This lemma facilitates the construction of a lightweight proxy score with strict theoretical error bounds, which has found widespread application in Locality-Sensitive Hashing (LSH) and Approximate Nearest Neighbor Search (ANNS)~\cite{gao2024rabitq,ailon2006approximate, argerich2017generic}. Inspired by this, we propose a novel sparse attention framework to resolve existing limitations of existing methods and accelerate inference in long-context scenarios.

In this paper, we propose \sysname, an innovative sparse attention framework. In the prefill phase, we efficiently map Key vectors onto the vertices of a randomly rotated unit hypercube, thereby compressing each high-dimensional Key into a binary vector. In the decoding phase, we construct a novel proxy score that transforms the computationally expensive floating-point vector inner product into an efficient computation between this binary vector and an INT4-quantized Query vector. Crucially, this proxy score serves as an unbiased estimator with a rigorous theoretical error bound, perfectly aligning with the stringent requirement for high precision. This transformation drastically reduces computational overhead, achieving an extremely lightweight design. Ultimately, by leveraging this proxy score for adaptive Top-$p$ retrieval across different tasks, \sysname achieves efficient, precise, and universally applicable acceleration for long-context inference without accessing the full KV cache. Furthermore, we design high-performance kernel operators and system-level scheduling optimizations to maximize hardware utilization and mask latency. Extensive experiments demonstrate that our approach significantly outperforms existing state-of-the-art (SOTA) baselines.

We summarize our contributions as follows:

\begin{itemize}[leftmargin=0.3cm]
    \item We propose \sysname, a novel sparse attention framework with a theoretically-grounded proxy score. It provides an unbiased estimator of attention scores, enabling adaptive Top-$p$ retrieval with a proven error bound.
    \item We implement efficient kernel operators and system-level scheduling optimizations to maximize hardware utilization and mask latency.
    \item We conduct a comprehensive evaluation of RaBitQCache, and the results demonstrate that it achieves a $2.16\times$ improvement in efficiency with almost no loss in accuracy.
\end{itemize}

%% file: 2_Related_Work.tex
\section{Related Work}
In this section, we introduce fundamental concepts related to LLMs and Johnson-Lindenstrauss Lemma.

\subsection{Sparse Attention}\label{sec:spa_attn}
The standard attention mechanism computes the output $o$ at step $t$ using the Query $q_t$, Keys $K$, and Values $V$ as $o=\text{softmax}(\frac{q_t K^T}{\sqrt{d}})V$~\cite{vaswani2017attention}. Sparse attention approximates this by restricting computation to a subset of indices $I$. Let $M_I$ be a mask matrix where entries corresponding to $I$ are 1 and others are 0. The sparse output is $\hat{o}=\text{softmax}(\frac{q_t K^T}{\sqrt{d}}) M_I V$.

To analyze the approximation quality, we bound the error $\|o-\hat{o}\|$. By applying the sub-multiplicative property of the Frobenius norm and leveraging prior findings that the distribution of $V$ is relatively smooth (treating $\|V\|_F$ as a constant)~\cite{zhao2024atom}, we derive the following bound:
\begin{equation}\label{func:mino}
\begin{aligned}
\|o&-\hat{o}\|= \|\text{softmax}\left(\frac{q_t\cdot K^T}{\sqrt{d}}\right)(M_{\mathcal{I}}-1^{n\times n})V\| \\
&\le\|\text{softmax}\left(\frac{q_t\cdot K^T}{\sqrt{d}}\right)(M_{\mathcal{I}}-1^{n\times n})\|\cdot\|V\|_F
\end{aligned}
\end{equation}

Consequently, the objective of sparse attention is to select the minimal number of tokens (sparsity) while keeping this error bound as low as possible, thereby mimicking the capabilities of full attention with significantly reduced overhead.

\subsection{Johnson-Lindenstrauss Lemma}\label{sec:jl}
The Johnson-Lindenstrauss (JL) Lemma~\cite{johnson1984extensions} states that $N$ points in a high-dimensional space can be embedded into a lower-dimensional space of $O(\log N)$ dimensions via random linear projection while approximately preserving pairwise distances. This suggests that computationally expensive high-dimensional retrieval can be optimized through dimensionality reduction, significantly lowering compute costs with minimal accuracy loss.

In this work, we focus on a key corollary of the JL Lemma: in high-dimensional spaces, two random unit vectors are often nearly orthogonal.
 To ensure the readability of the main text and avoid excessive complex proofs, we only provide a straightforward description here without involving any formulas. The complete derivation of how we leverage this theorem to formulate our unbiased estimator and prove its theoretical bounds is detailed in Appendix~\ref{RaBitQCache Method Derivation}.

%% file: 4_RabitQCache_exec.tex
\section{RaBitQCache Execution Process}
\label{sec:rabitqcache_execution}

In this section, we describe the execution process of RaBitQCache. To ensure clarity without delving into the proofs, we first define the operational notations and the core decomposition formula used to approximate the attention scores. Next, we elaborate on the \textit{Prefill} phase for index construction and the \textit{Decode} phase for efficient retrieval. Finally, we elaborate on system-level scheduling optimization strategies. For the detailed derivation of the relevant formulas, please refer to Appendix~\ref{RaBitQCache Method Derivation}.

\subsection{Notations and Problem Formulation}\label{sec:problem_formulation}

As established in Section~\ref{sec:spa_attn}, identifying the \textbf{most significant tokens} is equivalent to finding the key vectors k that yield the largest inner product with a given query vector q. Therefore, our objective is to efficiently estimate the inner product $\langle \mathbf{q}, \mathbf{k} \rangle$ between a query vector $\mathbf{q}$ and key vectors $\mathbf{K}$. 
Due to the different distributions of query vectors and key vectors in LLMs, we first align them and perform normalization processing.
In long-context inference scenarios, the number of tokens in the prefilling phase far exceeds the tokens generated during the decoding phase. Therefore, the vectors produced during the decoding phase do not cause a significant shift in the centroid.
Based on this observation, we treat the centroids of the prefilling query and key vectors, denoted as $\mathbf{c}_q$ and $\mathbf{c}_k$, as robust approximations for the global centroids of all data. The normalized vectors are then defined as $\mathbf{q_c} = \frac{\mathbf{q} - \mathbf{c}_q}{||\mathbf{q} - \mathbf{c}_q||}$ and $\mathbf{k_c} = \frac{\mathbf{k} - \mathbf{c}_k}{||\mathbf{k} - \mathbf{c}_k||}$. The original inner product $\langle \mathbf{q}, \mathbf{k} \rangle$ can be decomposed as follows:

\begin{equation}
\label{eq:decomposition}
\begin{aligned}
\langle \mathbf{q}, \mathbf{k} \rangle = &\|\mathbf{q} - \mathbf{c}_q\| \cdot \|\mathbf{k} - \mathbf{c}_k\| \cdot \langle \mathbf{q}_c, \mathbf{k}_c \rangle \\
&+ \langle \mathbf{q}, \mathbf{c}_k \rangle + \langle \mathbf{c}_q, \mathbf{k} \rangle - \langle \mathbf{c}_q, \mathbf{c}_k \rangle
\end{aligned}
\end{equation}

In this formula, terms such as $||\mathbf{q} - \mathbf{c}_q||$ and $\langle \mathbf{q}, \mathbf{c}_k \rangle$ are constants computed only once, with their costs amortized over all key vectors. The remaining terms involving only key vectors or centroids, namely $||\mathbf{k} - \mathbf{c}_k||$, $\langle \mathbf{c}_q, \mathbf{k} \rangle$, and $\langle \mathbf{c}_q, \mathbf{c}_k \rangle$, can all be pre-calculated. 
The computational bottleneck is the normalized dot product $\langle \mathbf{q}_c, \mathbf{k}_c \rangle$. 
To estimate $\langle \mathbf{q}_c, \mathbf{k}_c \rangle$ with high efficiency and theoretical guarantees. We introduce a random orthogonal rotation matrix $\mathbf{P} \in \mathbb{R}^{D \times D}$. The normalized vectors are rotated and projected onto the vertices of a unit hypercube. Specifically, $\mathbf{k_c}$ is mapped to a binary codebook vector $\bar{\mathbf{c}}$ via the sign of its rotation:
\begin{equation}
\bar{\mathbf{c}} = \frac{1}{\sqrt{D}} \text{sign}(\mathbf{P}^T \mathbf{k_c})
\end{equation}
where $\bar{c}$ represents the nearest vertex on the hypercube to the rotated key. Similarly, the query is rotated to $q' = P^T q_c$. Crucially, based on the derivation in Appendix~\ref{Online Inner Product Estimation}, we construct an estimator for the inner product:
\begin{equation}
\label{eq:estimator_core}
\langle \mathbf{q}_c, \mathbf{k}_c \rangle \approx \frac{\langle \bar{\mathbf{c}}, \mathbf{q'} \rangle}{\alpha}
\end{equation}
Here, $\alpha = \langle \bar{\mathbf{c}}, \mathbf{P}^T \mathbf{k_c} \rangle$ is a scalar correction factor to each key vector. We prove in Theorem~\ref{EandP} that this formulation serves as an unbiased estimator of the true cosine similarity.
Table~\ref{tab:notations} summarizes the symbols used in this process.

\begin{table}[t] 
\centering
\caption{Summary of Notations in RaBitQCache}
\label{tab:notations}
\begin{footnotesize} 
\begin{tabularx}{\columnwidth}{@{}l X@{}}
\toprule
Symbol & Description \\
\midrule
$L, D$ & Sequence length and QKV vector dimension \\
$\mathbf{q}_t, \mathbf{K}$ & Query vector at step $t$ and Key matrix \\
$\mathbf{c}_q, \mathbf{c}_k$ & Centroids for Query and Key vectors \\
$\mathbf{q}_c, \mathbf{k}_c$ & Normalized centered Query and Key vectors \\
$\mathbf{P} \in \mathbb{R}^{D \times D}$ & Random orthogonal rotation matrix \\
$\bar{\mathbf{C}}_b \in \{0, 1\}^{L \times D}$ & Binary Index (1-bit quantized rotated keys) \\
$\bar{\mathbf{c}}$ & Reconstructed codebook vector from $\bar{\mathbf{C}}_b$ \\
$\mathbf{q'} \in \mathbb{R}^D$ & $\mathbf{q'} = \mathbf{P}^T\mathbf{q_c}$, Query vector after random rotation  \\
$\bar{\mathbf{q}} \in \mathbb{R}^D$ & Quantized Query vector (reconstructed) \\
$\bar{\mathbf{q}}_u \in \mathbb{Z}^D$ & Quantized INT4 Query vector (integer) \\
$\alpha \in \mathbb{R}$ & Scalar correction factor for unbiased estimation \\
\bottomrule
\end{tabularx}
\end{footnotesize}
\end{table}

\subsection{Phase 1: Prefill and Index Construction}\label{Phase 1: Prefill and Index Construction}
The prefilling phase processes the input prompt to generate the initial KV cache and build the search index. Algorithm~\ref{alg:prefill} outlines the construction of the RaBitQCache index.

\begin{algorithm}[tb]
   \caption{RaBitQCache Index Construction (Prefill)}
   \label{alg:prefill}
\begin{algorithmic}[1]
   \STATE {\bfseries Input:} Key tensors $\mathbf{K}$, Rotation Matrix $\mathbf{P}$.
   \STATE {\bfseries Output:} Binary Index $\bar{\mathbf{C}}_b$, Correction Factors $\boldsymbol{\alpha}$.
   \STATE \textbf{Step 1: Centering and Normalization}
   \STATE Calculate centroid $\mathbf{c}_k$ from $\mathbf{K}$
   \FOR{each token $i$ in $1 \dots L$}
   \STATE $\mathbf{k}^{(i)} \leftarrow \mathbf{K}[i] - \mathbf{c}_k$
   \STATE $\mathbf{k}_c^{(i)} \leftarrow \mathbf{k}^{(i)} / \|\mathbf{k}^{(i)}\|_2$
   \ENDFOR
   \STATE \textbf{Step 2: Randomized Rotation}
   \STATE $\mathbf{K}' \leftarrow (\mathbf{K} - \mathbf{c}_k)\mathbf{P}^T$
   \STATE \textbf{Step 3: Binary Quantization \& Correction}
   \FOR{each token $i$ in $1 \dots L$}
   \STATE $\bar{\mathbf{C}}_b[i] \leftarrow \text{sign}((\mathbf{K}'[i])^T)$
   \STATE $\bar{\mathbf{c}} \leftarrow (2 \cdot \bar{\mathbf{C}}_b[i] - \mathbf{1}_D) / \sqrt{D}$
   \STATE $\boldsymbol{\alpha}[i] \leftarrow \langle \bar{\mathbf{c}}, \mathbf{P}^T \mathbf{k}_c^{(i)} \rangle$
   \ENDFOR
   \STATE Store $\bar{\mathbf{C}}_b$ and $\boldsymbol{\alpha}$
\end{algorithmic}
\end{algorithm}

\textbf{Centering and Rotation.} 
As described in Section~\ref{sec:problem_formulation}, for an incoming batch of tokens with key vectors $\mathbf{K} \in \mathbb{R}^{L \times D}$, we first calculate the centroid $\mathbf{c}_k$ of the current batch. We then perform centering and normalization to obtain $\mathbf{k}_c^{(i)}$ by subtracting $\mathbf{c}_k$ and normalizing to the unit sphere (lines 6-7). Subsequently, we apply a random orthogonal matrix $\mathbf{P}$, generated once during model initialization, to obtain the rotated keys $\mathbf{K}' = (\mathbf{K} - \mathbf{c}_k)\mathbf{P}$ (line 10). 
$\mathbf{K}'$ denotes the key vector used for fast codebook computation. Owing to the design of our codebooks, the sign of the selected codebook can be directly determined by the sign bits of $\mathbf{K}'$ (see Eq.~\ref{c_bar_eq_sign}). Consequently, the key vector can be efficiently represented using binary quantization.

\textbf{Binary Index Generation.} 
We generate the compressed binary index $\bar{\mathbf{C}}_b$ by extracting the sign bit of the rotated vectors, i.e., $\bar{\mathbf{C}}_b = \text{sign}(\mathbf{K}')$ (line 13). This results in a 1-bit representation per dimension. To ensure the estimator remains unbiased, we calculate the scalar correction factor $\alpha[i] = \langle \bar{\mathbf{c}}, \mathbf{P}^T \mathbf{k}_c^{(i)} \rangle$ for each token, where $\bar{\mathbf{c}}$ is the codebook vector derived directly from the binary bits $\bar{\mathbf{C}}_b[i]$ (lines 14-15). Finally, $\bar{\mathbf{C}}_b$ and $\boldsymbol{\alpha}$ are stored in GPU memory to facilitate the rapid subsequent estimation of the inner product using Eq.~\ref{eq:estimator_core}.

\begin{algorithm}[tb]
   \caption{RaBitQCache Decoding with Adaptive Retrieval}
   \label{alg:decode}
\begin{algorithmic}[1]
   \STATE {\bfseries Input:} Current Query $\mathbf{q}_t$, Index $\bar{\mathbf{C}}_b, \boldsymbol{\alpha}$, Centroids, Matrix $\mathbf{P}$.
   \STATE {\bfseries Output:} Attention Output $\mathbf{o}_t$.
   \STATE \textbf{Step 1: Query Pre-processing \& INT4 Quant}
   \STATE $\mathbf{q}_c \leftarrow (\mathbf{q}_t - \mathbf{c}_q) / \|\mathbf{q}_t - \mathbf{c}_q\|_2$
   \STATE $\mathbf{q}' \leftarrow \mathbf{P}^T \mathbf{q}_c$
   \STATE $\bar{\mathbf{q}}_u \leftarrow \text{UniformQuantize}(\mathbf{q}')$
   \STATE \textbf{Step 2: Score Estimation}
   \STATE $S_{binary} \leftarrow \text{Int4\_Dot\_Binary}(\bar{\mathbf{C}}_b, \bar{\mathbf{q}}_u)$
   \STATE $S_{norm} \leftarrow \text{Reconstruct}(S_{binary}) / \boldsymbol{\alpha}$
   \STATE $S_{orig} \leftarrow \text{Eq.\ref{eq:decomposition}}(S_{norm})$
   \STATE \textbf{Step 3: Adaptive Selection}
   \STATE $I_{select} \leftarrow \text{TopP}(\texttt{softmax}(S_{orig}/\sqrt{D}), p)$
   \STATE \textbf{Step 4: Hybrid Attention}
   \STATE Fetch $\mathbf{K}[I_{select}], \mathbf{V}[I_{select}]$
   \STATE $\mathbf{o}_t \leftarrow \text{Attn}(\mathbf{q}_t, \mathbf{K}_{select} \cup \mathbf{K}_{local}, \mathbf{V}_{select} \cup \mathbf{V}_{local})$
\end{algorithmic}
\end{algorithm}

\subsection{Phase 2: Decode with RaBitQCache}
The decoding phase generates tokens step-by-step. For each new query $\mathbf{q}_t$, RaBitQCache executes the pipeline shown in Algorithm~\ref{alg:decode}.

\textbf{Query Quantization.}
From Theorem~\ref{EandP}, we know that the error introduced in the previous Eq.~\ref{eq:estimator_core} is $O\left(\frac{1}{\sqrt{D}}\right)$. Therefore, to accelerate the inner product computation, we can quantize $q'$ if the error from this quantization step is within the order of magnitude of the above error. According to Theorem~\ref{QUANTIZE_TH}, uniform scalar quantization can be employed to quantize the floating-point vector $q'$ into a 4-bit integer vector $\bar{q}_u$. Subsequently, we can utilize Eq.~\ref{kc_bar_dot_q_c2} to accelerate the calculation.
Thus, upon receiving $\mathbf{q}_t$, we first center and rotate it to obtain $\mathbf{q}' = \mathbf{P}^T (\mathbf{q}_t - \mathbf{c}_q)$ (lines 4-5). Then, we quantize the floating-point vector $\mathbf{q}'$ into a 4-bit integer vector $\bar{\mathbf{q}}_u$ using uniform scalar quantization (line 6):

\begin{equation}
\bar{\mathbf{q}}_u = \text{Round}\left( \frac{\mathbf{q}' - q_l}{\Delta} \right)
\end{equation}
where $[q_l, q_r]$ is the dynamic range of $\mathbf{q}'$ and $\Delta = \text{Round}(\frac{q_r-q_l}{2^4-1})$ is the step size.

\textbf{Hardware-Efficient Score Estimation.}
The computationally expensive floating-point vector inner product is transformed into an efficient operation between the binary index $\bar{\mathbf{C}}_b$ and the INT4-quantized query $\bar{\mathbf{q}}_u$. We implement a custom GEMV CUDA kernel to compute the dot product $\langle \bar{\mathbf{C}}_b, \bar{\mathbf{q}}_u \rangle$ (line 8).
Then, this result can be used in Eq.~\ref{kc_bar_dot_q_c2} to reconstruct the approximate estimated value of $\langle \bar{\mathbf{c}}, \mathbf{q'} \rangle$ (Line 9). For details on how this result is specifically reconstructed, please refer to Appendix~\ref{Fast Computation via Low-Precision Arithmetic}. By dividing by the correction factor $\alpha$, we obtain the normalized similarity $S_{norm}=\langle \mathbf{q}_c, \mathbf{k}_c \rangle$ (Eq.~\ref{eq:estimator_core}). Finally, substituting this back into Eq.~\ref{eq:decomposition} yields the original attention scores $S_{orig}$ (line 10).

\textbf{Adaptive Selection and Data Fetching.}
Based on the proxy scores, we employ an adaptive sampling strategy. The function TopP($\texttt{softmax}(S_{orig}/\sqrt{D}), p$) efficiently identifies the minimum set of indices $I_{select}$ required to cover the probability mass $p$ (line 12). The system then fetches the corresponding full-precision KV pairs. Finally, the attention output is computed using a hybrid context of the retrieved sparse tokens and the local sliding window (lines 14-15). We will introduce how we seamlessly integrate the local sliding window into our system in Section~\ref{Pipelined Scheduling and Lazy Updates}.

\subsection{Pipelined Scheduling and Lazy Updates}\label{Pipelined Scheduling and Lazy Updates}
To further achieve system-level acceleration, we adopt two hardware-aware scheduling strategies for the prefilling and decoding phase based on their computational characteristics. The specific scheme is illustrated in Figure \ref{fig:pipeline}.

\begin{figure}[t]
  \begin{center}
    \centerline{\includegraphics[width=\columnwidth]{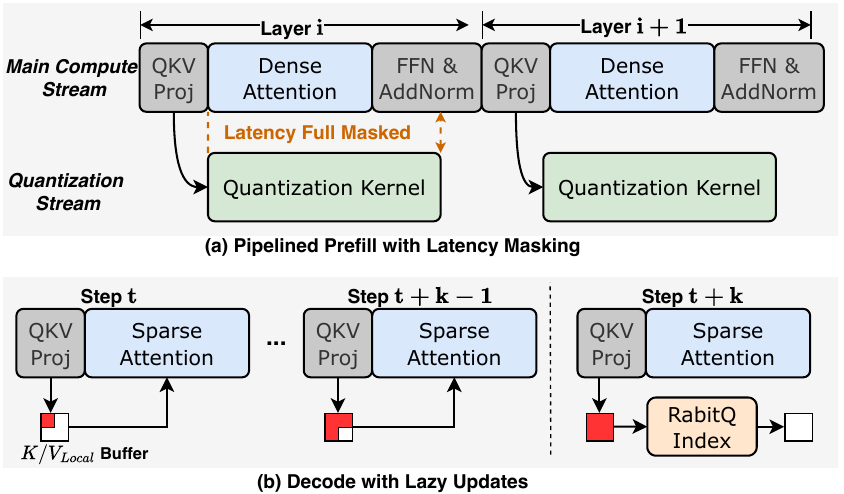}}
    \caption{
      Overview of RaBitQCache System-Level Scheduling Optimizations: Asynchronous Pipelined Prefill for Latency Masking and Lazy Updates for Efficient Decoding.
    }
    \label{fig:pipeline}
  \end{center}
\end{figure}

\textbf{Asynchronous Prefill.} 
We observe that the index constructed during the prefilling phase is only required until the first decoding step of that layer. Leveraging this, we decouple index construction from the main attention computation. While the primary CUDA stream executes the compute-intensive $O(L^2)$ dense attention, a secondary low-priority stream concurrently quantizes the key vectors ($O(L)$) into the binary index. This overlapping ensures the indexing latency is effectively hidden by the quadratic complexity of the prefill attention, resulting in zero visible overhead during prefilling phase.

\textbf{Lazy Decode Updates.} 
In the decoding phase, updating the index token-by-token incurs significant kernel launch overhead. We therefore adopt a Lazy Update strategy where the local window $\mathbf{K}_{local}$ acts as a full-precision write-back buffer. New tokens are temporarily accumulated in this buffer and participate in the hybrid attention calculation 
$\mathbf{o}_t = \text{Attention}(\mathbf{q}_t, \mathbf{K}_{fetched} \cup \mathbf{K}_{local}, \mathbf{V}_{fetched} \cup \mathbf{V}_{local})$ without immediate quantization. These tokens will only be quantized in batches when $\mathbf{K}_{local}$ reaches the predefined batch capacity. This strategy exploits data parallelism to saturate GPU resources and minimize kernel launch overheads. This optimization not only improves efficiency but also enhances the generation quality. Inherently, this design functions as a sliding window that guarantees high-precision access to the recent context, while the index addresses long-range dependencies via adaptive Top-$p$ retrieval.

%% file: 5_Efficient_Implementation.tex
\section{Efficient Kernel Implementations}

In this section, we discuss the low-level optimizations employed to maximize the throughput of RaBitQCache.





\subsection{Efficient Top-$p$ Operator}

To ensure high-throughput performance during the adaptive retrieval phase, we implement a custom fused CUDA kernel. Our approach is inspired by the \textit{Nucleus (Top-$p$) Sampling} strategy commonly used in LLM generation~\cite{ravfogel2023conformal, brown2020language}. Specifically, we adapt the high-performance sampling kernel from FlashInfer~\cite{ye2025flashinfer} to serve as our massive-parallel KVCache retrieval masking operator.

\begin{algorithm}[tb]
   \caption{Top-$p$ Mask Generation Logic (Adapted from FlashInfer)}
   \label{alg:topp_logic}
\begin{algorithmic}[1]
   \STATE {\bfseries Input:} Scores $S$, Target mass $p$
   \STATE {\bfseries Output:} Boolean Mask $M$
   \STATE $low \leftarrow 0$, $high \leftarrow \max(S)$
   \REPEAT
       \STATE $\pi_0 \leftarrow (high + 2 \cdot low) / 3$
       \STATE $\pi_1 \leftarrow (2 \cdot high + low) / 3$
       \STATE $G_0 \leftarrow \sum_{s \in S} s \cdot \mathbb{I}(s > \pi_0)$
       \STATE $G_1 \leftarrow \sum_{s \in S} s \cdot \mathbb{I}(s > \pi_1)$
       \STATE $v_{min} \leftarrow \min \{s \in S \mid s > low\}$
       \STATE $v_{max} \leftarrow \max \{s \in S \mid s \le high\}$
       
       \IF{$G_1 \ge p$}
           \STATE $low \leftarrow \pi_1$
       \ELSIF{$G_0 \ge p$}
           \STATE $low \leftarrow \pi_0$
           \STATE $high \leftarrow \min(\pi_1, v_{max})$
       \ELSE
           \STATE $high \leftarrow \min(\pi_0, v_{max})$
       \ENDIF
   \UNTIL{$v_{min} = v_{max}$}
   \STATE $\theta \leftarrow low$
   \STATE $M \leftarrow (S > \theta)$
\end{algorithmic}
\end{algorithm}

Standard Top-$p$ implementations typically rely on sorting, incurring an $O(L \log L)$ cost that bottlenecks long-context processing. Instead, we employ a linear-time ternary search approach based on value partitioning, as outlined in Algorithm~\ref{alg:topp_logic}. The process begins by initializing a search interval $[low, high]$ based on the min/max values of the score distribution (Line 3).
Inside the iterative search loop (Lines 4-19), the algorithm partitions the current value range using two pivots, $\pi_0$ and $\pi_1$, effectively dividing the search space into three segments. For each iteration, we concurrently compute two key statistics: the accumulated probability mass ($G_0, G_1$) above each pivot, and the tightest data-dependent bounds ($v_{min}, v_{max}$) within the current range. 

Based on the comparison between the accumulated mass $G$ and the target $p$, we dynamically discard parts of the search space (Lines 11-18). This loop continues until the data-dependent bounds converge ($v_{min} = v_{max}$), ensuring the final threshold $\theta$ precisely matches an existing score in the distribution. Finally, a boolean mask is generated by comparing all scores against this derived threshold.
For more details about this kernel, please refer to Appendix~\ref{app:topp_kernel_opt}.

\subsection{Hardware-Efficient Approximate Score Computation}
\label{sec:gemv_optimization}

The core computational bottleneck in RaBitQCache lies in estimating the similarity scores between the quantized INT4 query vector $\bar{q}_u$ and the massive binary key cache $\bar{C}_b$. To maximize the throughput of the approximate score estimation, we implement a custom CUDA kernel that optimizes the INT4 $\times$ Binary GEMV operation through three critical techniques:

\textbf{Packed Binary Representation.}
We address the significant memory bandwidth waste inherent in storing binary keys as 8-bit integers (\texttt{uint8}) by adopting a dense storage format. Specifically, we pack 32 consecutive binary elements along the head dimension into a single 32-bit integer, achieving an $8\times$ reduction in memory bandwidth consumption. To prevent runtime latency, this data packing is performed ahead of time during the quantization phase, avoiding dynamic overhead during inference.

\textbf{Shared Memory Tiling.}
Observing that the same INT4 quantized query vector $\mathbf{q}_u$ is required for inner product calculations against all keys, we implement a caching strategy to eliminate redundant global memory accesses. Each thread block first cooperatively loads the current token's $\mathbf{q}_u$ into on-chip shared memory; subsequently, all threads within the block read from this low-latency cache while processing distinct key-value pairs. This design effectively reduces global memory traffic for the query vector by a factor equivalent to the block size.

\textbf{Vectorized Bit Extraction.}
To optimize the inner loop of the dot product computation, we leverage vectorized instructions and bitwise arithmetic. We utilize 128-bit vectorized loads to fetch data efficiently and employ fast bitwise operations to extract individual bits from the packed binary format. When combined with loop unrolling, this design enables the compiler to generate a highly efficient instruction pipeline that maximizes instruction-level parallelism.

%% file: 6_Evaluation.tex
\section{Evaluation}
\label{sec:evaluation}

In this section, we perform quantitative experiments to demonstrate that RaBitQCache effectively accelerates long-context inference while preserving generation quality. We first describe our implementation and experimental setup in Section \ref{sec:setup}. We then present the accuracy results in Section \ref{sec:correctness} and the efficiency performance analysis in Section \ref{sec:efficiency}. Finally, we perform ablation studies in Section \ref{sec:ablation}.

\subsection{Implementation and Setup}
\label{sec:setup}

We implemented RaBitQCache as a high-performance serving system built upon vLLM v0.10.2~\cite{kwon2023efficient}, integrating FlashInfer v0.5.3~\cite{ye2025flashinfer} for optimized attention kernels and LMCache~\cite{liu2025lmcache} for memory management. The system comprises over 5,000 lines of code, utilizing Python for orchestration, OpenAI Triton, and custom CUDA kernels. All experiments are conducted on servers equipped with NVIDIA Hopper-architecture GPUs.

\subsection{Accuracy Experiments}
\label{sec:correctness}

\begin{table}[tb]
  \caption{Main results on LongBench averaged over 13 datasets. The results are formatted as Generation Score (Attention Recall). The recall rate is defined as the accumulated attention score of the selected tokens divided by the total score of Full Attention. RaBitQ denotes RaBitQCache.}
  \label{tab:short_longbench_results}
  \begin{center}
    \begin{small}
      \begin{tabularx}{\columnwidth}{@{} c  c Y Y @{}}
        \toprule
        \textbf{Method} & \textbf{Budget } & \textbf{Longchat-7B-v1.5-32k} & \textbf{LLaMA-3.1-8B} \\
        \midrule
        Full   & -(100\%)    & 35.78(100\%) & 50.58(100\%) \\
        \midrule
        Oracle & 1024 (11.4\%) & 35.84(89.3\%)     & 50.31(86.9\%) \\
        \midrule
        RaBitQ & $p=0.95$ (17.33\%)    & 36.21(89.8\%) & \textbf{50.6}(90.7\%) \\
        \midrule
        Quest  & 256 (2.84\%)  & 30.50(27.3\%) & 36.71(53.3\%) \\
               & 1024 (11.38\%) & 35.30(58.1\%) & 46.52(82.6\%) \\
               & 4096 (40.29\%) & \textbf{36.28}(86.1\%) & 49.24(92.9\%) \\
               & 8192 (64.57\%) & 36.11(94.7\%) & 50.43(97.0\%) \\
        \midrule
        DS     & 256 (2.86\%)  & 3.75(10.0\%)  & 49.06(42.7\%) \\
               & 1024 (11.42\%) & 9.52(20.4\%)  & 50.28(68.4\%) \\
               & 4096 (40.33\%) & 21.28(47.8\%) & 50.40(90.7\%) \\
               & 8192 (64.60\%) & 32.58(67.4\%) & 50.47(96.9\%) \\
        \midrule
        SparQ  & Ratio=0.25 (25.00\%)    & 35.91(93.1\%) & 50.15(89.8\%) \\
        \bottomrule
      \end{tabularx}
    \end{small}
  \end{center}
\end{table}

\textbf{Benchmarks and Models.}
To evaluate generation quality and retrieval accuracy, we use three complementary benchmarks: \textbf{LongBench}~\cite{bai2024longbench} for realistic long-context QA, summarization, and code understanding; \textbf{RULER}~\cite{hsieh2024ruler} for synthetic long-context retrieval at scales of 8K to 64K; and \textbf{GSM8K}~\cite{cobbe2021training} for mathematical reasoning under the long-output regime. We select three widely-used models spanning a wide capacity range: Longchat-7B-v1.5-32k~\cite{li2023long}, LLaMA-3.1-8B-Instruct~\cite{dubey2024llama}, and the larger LLaMA-3.1-70B-Instruct~\cite{dubey2024llama}.


\textbf{Baselines.}
We compare RaBitQCache against several state-of-the-art sparse attention, including Quest~\cite{tang2024quest}, Double Sparse (DS)~\cite{yang2024post}, SparQ~\cite{ribar2024sparq}, MagicPIG~\cite{chen2025magicpig}, PyramidKV~\cite{cai2024pyramidkv}, SnapKV~\cite{li2024snapkv}, PQCache~\cite{zhang2025pqcache}, and KIVI~\cite{liu2024kivi}. Following the baselines, we use full attention in the first two layers to ensure fair comparison. 
For all methods, we use the recommended configurations from their official repositories: Quest (\texttt{chunk\_size}=16), SparQ ($r=16$), DS ($r=32$, \texttt{quant\_bit}=2, Query channel), MagicPIG ($K=10$, $L=150$, $W=64$), PyramidKV (\texttt{capacity}=512, \texttt{window}=64), SnapKV (\texttt{capacity}=1024, \texttt{window}=64), PQCache (\texttt{compress}=0.1, \texttt{subvec}=2, \texttt{bits}=6), and KIVI (\texttt{k/v\_bits}=2, \texttt{group}=32, \texttt{residual}=128). For RaBitQCache, we set the hyperparameter $p=0.95$ on LongBench and $p=0.9$ on RULER and GSM8K.

\begin{table}[tb]
\caption{Average LongBench score on LLaMA-3.1-8B-Instruct against additional sparse attention baselines. RaBitQCache uniquely supports adaptive Top-$p$; all other methods are restricted to fixed-budget Top-$k$ or KV cache compression. Method abbreviations: MPIG=MagicPIG, Pyr=PyramidKV, Snap=SnapKV, PQC=PQCache.}
\label{tab:more_baselines_summary}
\begin{center}
\begin{footnotesize}
\begin{tabularx}{\columnwidth}{@{} l *{6}{>{\centering\arraybackslash}X} @{}}
\toprule
Method & RaBitQ & MPIG & Pyr & Snap & PQC & KIVI \\
\midrule
Avg. & \textbf{50.63} & 49.95 & 45.09 & 44.91 & 50.34 & 50.13 \\
\bottomrule
\end{tabularx}
\end{footnotesize}
\end{center}
\end{table}

\textbf{Results on Longbench.}
\label{sec:longbench_results}
We selected 13 tasks from the LongBench benchmark that cover all task types. To conduct a comprehensive evaluation, we tested multiple token budgets for every baseline scheme. Additionally, we evaluated an ``Oracle'' method that selects the optimal Top-$k$ tokens based on the ground-truth attention scores calculated via full attention, serving as an upper bound for retrieval-based approaches. The summarized average results are presented in Table~\ref{tab:short_longbench_results}. For detailed results across specific tasks, please refer to Table~\ref{tab:full_long_results} and Table~\ref{tab:budget_recall_stats_full}.


It can be observed that our method incurs negligible performance loss, consistently performing on par with or even surpassing the full-attention baseline. In contrast, other competitive methods exhibit significant performance gaps when constrained by varying token budgets. On the LongChat model, we achieve 89.85\% attention recall using only 17.33\% of the budget---efficiency nearly identical to the Oracle (89.30\% recall with 11.42\% budget). Although Quest-4096 marginally outperforms our approach on the LongChat model, it comes at a significantly higher cost (40.29\%) and achieves a lower recall rate (86.05\%). Detailed analysis shows that token requirements vary drastically across tasks; fixed token budgets lack the flexibility to adapt to these variations, whereas our adaptive retrieval automatically adjusts the budget without parameter tuning. 

To further position RaBitQCache against the broader sparse attention literature, we additionally compare with five state-of-the-art baselines on LLaMA-3.1-8B-Instruct, covering retrieval-based (MagicPIG, SnapKV), eviction-based (PyramidKV), and quantization-based (PQCache, KIVI) methods. As summarized in Table~\ref{tab:more_baselines_summary}, RaBitQCache achieves the highest average score among all baselines. More importantly, MagicPIG, PyramidKV, SnapKV, and PQCache only provide \emph{relative} ranking information and therefore can only support fixed-budget Top-$k$ selection, while KIVI is orthogonal to retrieval as it compresses the KV cache directly; in contrast, RaBitQCache provides an unbiased estimate of attention scores with a provable error bound and is the only method capable of supporting adaptive Top-$p$ retrieval. A natural integration is to first apply our method for adaptive selection and then apply KIVI-style compression to the retained tokens, which we leave as future work. Full per-task results are provided in Table~\ref{tab:full_more_baselines} of the appendix. These results indicate that our adaptive retrieval mechanism effectively ensures high recall of critical information, maintaining robust performance in all cases.

\begin{table}[tb]
\caption{Generalization across model scales and benchmarks. LongBench reports average score across 13 tasks (LLaMA-3.1-70B-Instruct); RULER reports average accuracy across 8K--64K (LLaMA-3.1-8B-Instruct); GSM8K reports accuracy / attention recall (LLaMA-3.1-8B-Instruct).}
\label{tab:generalization_summary}
\begin{center}
\begin{footnotesize}
\begin{tabularx}{\columnwidth}{@{} l *{4}{>{\centering\arraybackslash}X} @{}}
\toprule
Benchmark & Full & RaBitQ & Quest & SparQ \\
\midrule
LongBench (Avg.) & 54.62 & 54.58 & 50.69 & 53.76 \\
RULER (Avg.) & 79.65 & 79.55 & 78.20 & 79.08 \\
GSM8K (Acc.) & 0.81 & 0.77 & 0.70 & 0.77 \\
GSM8K (Recall) & 1.00 & 0.888 & 0.813 & 0.605 \\
\bottomrule
\end{tabularx}
\end{footnotesize}
\end{center}
\end{table}

\textbf{Generalization to Larger Models and Diverse Benchmarks.}
To further validate the generality of RaBitQCache across model scales and task types, we evaluate on LLaMA-3.1-70B-Instruct (LongBench) and on LLaMA-3.1-8B-Instruct (RULER for synthetic long-context retrieval at 8K--64K, GSM8K for mathematical reasoning with long outputs). On RULER and GSM8K, DS and Quest use a fixed budget of 4096 (RULER) or 256 (GSM8K) tokens, SparQ uses its default setting, and RaBitQCache uses adaptive Top-$p$ with $p=0.9$. The summarized results are presented in Table~\ref{tab:generalization_summary}; full per-task results are provided in Table~\ref{tab:llama70b_longbench}--\ref{tab:gsm8k_full} of the appendix.

Several observations stand out. \emph{First}, on LongBench with LLaMA-3.1-70B, RaBitQCache attains 54.58, nearly closing the gap to full attention while clearly outperforming Quest (50.69) and SparQ (53.76). \emph{Second}, on RULER, our method matches or even surpasses Full at 8K--32K and remains competitive at 64K, while spending only 1076 tokens at 8K and naturally scaling to 3581 tokens at 64K---directly demonstrating the task-adaptive nature of Top-$p$ retrieval. \emph{Third}, on GSM8K, RaBitQCache matches the accuracy of DS and SparQ (0.77) while attaining the highest attention recall (0.888) among all sparse baselines, indicating robustness even when the output length grows large.

\begin{figure}
  \begin{center}\centerline{\includegraphics[width=\columnwidth]{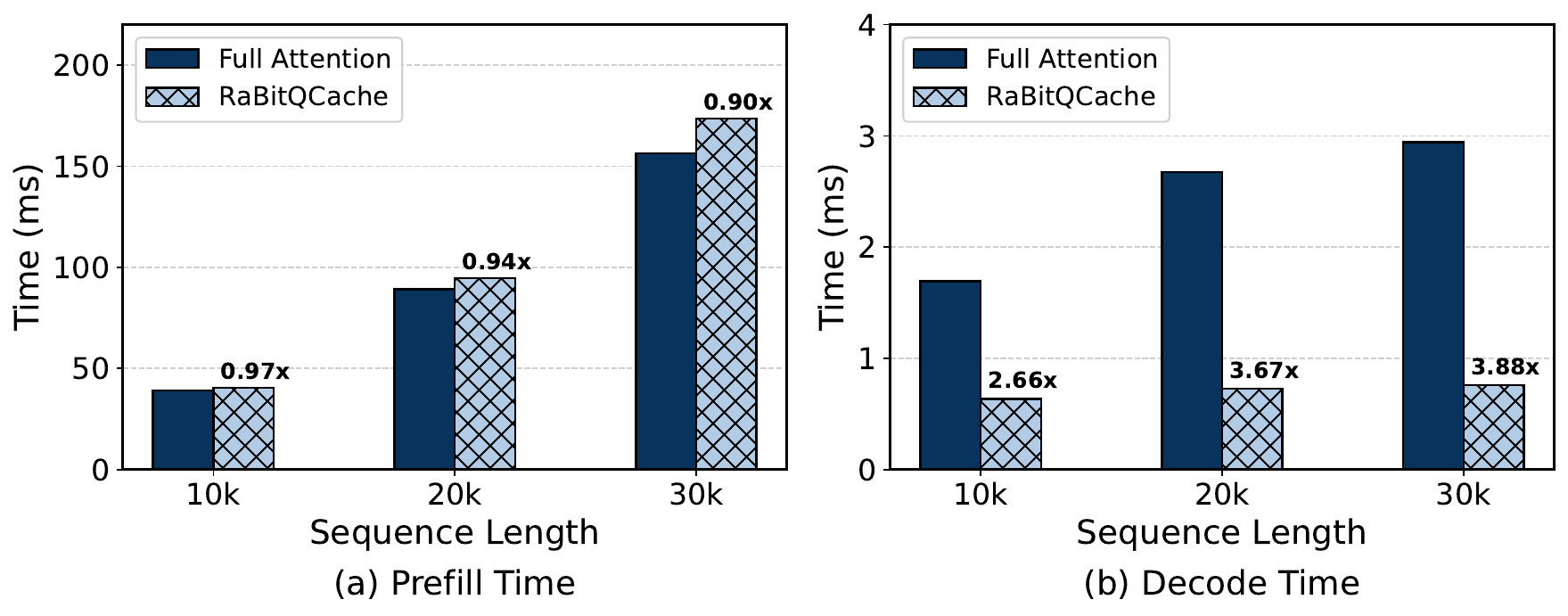}}
\caption{Inference Latency Analysis. (a) Prefill: RaBitQCache incurs negligible overhead ($<10\%$) via asynchronous quantization. (b) Decoding: RaBitQCache achieves up to $3.88\times$ speedup at 30k context, effectively decoupling latency from context growth.}
    \label{fig:prefill}
  \end{center}
\end{figure}


\subsection{Efficiency Experiments}
\label{sec:efficiency}

\textbf{Benchmarks and Setup.}
We evaluate system efficiency by measuring end-to-end latency acceleration and attention computation speedup, specifically distinguishing between the prefill phase (Time-To-First-Token, TTFT) and the decoding phase (Time-Between-Tokens, TBT). We utilize vLLM integrated with FlashInfer, employing FlashAttention-2~\cite{dao2023flashattention} as the backend, to serve as our Full Attention baseline. Notably, our preliminary tests indicate that neither the official repositories of other sparse attention systems nor their naive implementations within vLLM can outperform the absolute speed of the highly optimized vLLM-based Full Attention baseline. To address this issue and enable a fair comparison, instead of excluding these baselines, we normalize the execution time of each method against the Full Attention implementation within its respective system. This allows us to evaluate the relative acceleration ratio achievable by each method. The evaluation utilizes real-world workloads from the LongBench dataset from 10k to 30k tokens to assess performance scaling.

\textbf{Prefill Latency (TTFT).} 
Figure~\ref{fig:prefill} (a) compares the Time-to-First-Token. RaBitQCache exhibits negligible overhead (less than $10\%$) compared to the standard FlashAttention baseline. The main performance degradation stems from the fact that Prefill is a computationally intensive task, so asynchronous computing will inevitably affect the efficiency of the main CUDA stream computation. This result empirically confirms the effectiveness of our asynchronous pipelining strategy described in Section~\ref{Pipelined Scheduling and Lazy Updates}, which successfully masks the computational cost of random rotation and quantization behind the dense attention computation.


\textbf{Decoding Latency (TBT).} 
We evaluate the generation latency across varying context lengths ranging from 10k to 32k. As shown in Figure~\ref{fig:prefill} (b), RaBitQCache achieves a significant speedup of up to $3.88\times$ compared to Full Attention. This consistent acceleration verifies that our lightweight binary index scan effectively alleviates the memory I/O bottleneck inherent in loading massive KVCaches, decoupling inference latency from context growth.

\begin{figure}[t]
  \begin{center}
    \centerline{\includegraphics[width=0.9\columnwidth]{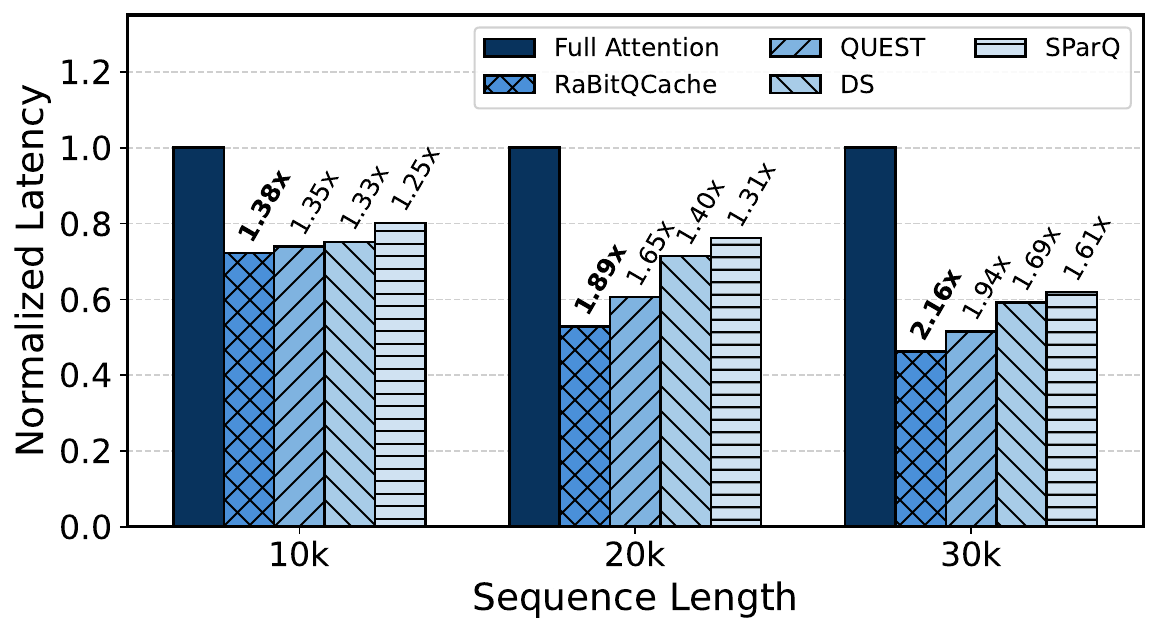}}
    \caption{
        End-to-End Latency Comparison. We achieve a maximum $2.16\times$ system-level speedup by significantly reducing memory I/O bottlenecks while maintaining retrieval precision.
    }
    \label{fig:e2e_speedup}
  \end{center}
\end{figure}

\textbf{End-to-End Speedup.} 
To assess real-world performance, we also assessed the end-to-end acceleration of different systems. For different methods, we adopt a token budget that achieves comparable scores. As illustrated in Figure~\ref{fig:e2e_speedup}, RaBitQCache achieves an overall acceleration of $2.16 \times$ compared to the full precision baseline. This demonstrates that our method translates theoretical computational reductions into substantial wall-clock speedups, making it a highly practical solution for long-context serving.


\subsection{Ablation Studies}
\label{sec:ablation}

\begin{figure}[b]
\begin{center}\centerline{\includegraphics[width=\columnwidth]{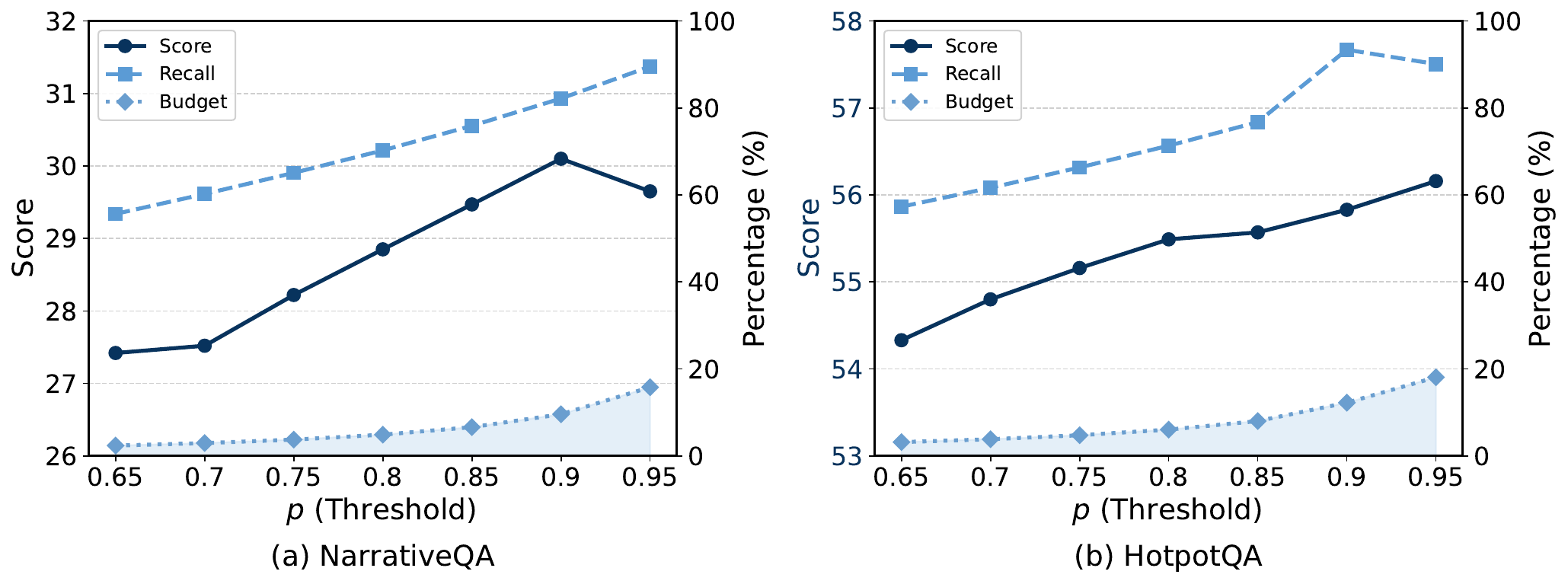}}
    \caption{
        Sensitivity analysis of threshold $p$ on (a) NarrativeQA and (b) HotpotQA. The Left Y-axis denotes the generation quality (Score), while the Right Y-axis indicates the percentage for both Recall and Token Budget. 
    }
    \label{fig:sensitivity_p}
  \end{center}
\end{figure}

\textbf{Sensitivity to Threshold $p$.} 
We investigate the sensitivity of the cumulative probability threshold $p$ by evaluating RaBitQCache on the NarrativeQA and HotpotQA datasets. As illustrated in Figure~\ref{fig:sensitivity_p}, we track the Generation Score alongside the Retrieval Recall (accuracy) and Token Budget (efficiency) as $p$ varies from 0.65 to 0.95. We observe that the response trends to changes in $p$ are highly consistent across different tasks; this stability stems from the fact that $p$ represents the cumulative probability mass, which is less susceptible to task-specific variations. In stark contrast, the static budget $k$ exhibits significant divergence across tasks, as evidenced in Table~\ref{tab:budget_recall_stats_full}. Furthermore, RaBitQCache achieves performance comparable to full attention while accessing less than 20\% of the KV cache. This empirically validates the effectiveness of our unbiased estimator in accurately identifying sparse attention patterns. These results demonstrate that $p$ serves as a robust, data-dependent hyperparameter capable of automatically adapting the retrieval budget to the intrinsic information density of the context.

\textbf{Effect of Centroid Re-centering.}
RaBitQCache uses prefill-phase centroids $\mathbf{c}_q, \mathbf{c}_k$ to re-center query and key vectors before quantization, which is required by the theoretical derivation that assumes near-uniform distribution on the unit hypersphere. To assess its empirical impact, we compare with a variant that omits the centering step. Across 13 LongBench tasks on LLaMA-3.1-8B-Instruct, removing re-centering reduces the average score from 50.63 to 50.25. Per-task results are reported in Table~\ref{tab:recenter_ablation} of the appendix. While the empirical degradation is mild on the tested workloads, re-centering remains theoretically required to satisfy the unit-hypersphere uniformity assumption underlying the $O(1/\sqrt{D})$ error bound. Moreover, recent work~\cite{xing2026beyond} observes that key and query vectors tend to be tightly clustered rather than uniformly distributed on the unit hypersphere, which may invalidate the uniformity assumption in practice. Whether re-centering is strictly necessary in practical deployment under such clustered distributions and significant decode-phase distribution drift, together with a more rigorous theoretical treatment of these settings, remains an interesting direction for future work.

\begin{table}[tb]
\centering
\caption{\texttt{Int4DotBinary} operator latency with different configs. The \textbf{Config} column corresponds to the tuple (\textit{batch\_size}, \textit{num\_quantized\_tokens}, \textit{num\_heads}, \textit{head\_size}).}
\label{tab:kernel_ablation}
\begin{footnotesize} 
\begin{tabularx}{\columnwidth}{@{} l *{3}{>{\centering\arraybackslash}X} @{}}
\toprule
 Config & Original(ms) & Warp(ms) & speedup \\
\midrule
$1{\times}1024{\times}8{\times}128$  & 0.0382 & 0.0127 & $3.01\times$ \\
$1{\times}4096{\times}8{\times}128$  & 0.0386 & 0.0130 & $2.96\times$ \\
$1{\times}8192{\times}8{\times}128$  & 0.0385 & 0.0129 & $2.98\times$ \\
$1{\times}16384{\times}8{\times}128$ & 0.0388 & 0.0136 & $2.84\times$ \\
$4{\times}4096{\times}8{\times}128$  & 0.0387 & 0.0130 & $2.97\times$ \\
$8{\times}4096{\times}8{\times}128$  & 0.0741 & 0.0216 & $3.43\times$ \\
$1{\times}4096{\times}32{\times}128$ & 0.1585 & 0.0501 & $3.16\times$ \\
$1{\times}8192{\times}8{\times}64$   & 0.0212 & 0.0093 & $2.27\times$ \\
\bottomrule
\end{tabularx}
\end{footnotesize}
\end{table}

\textbf{Kernel Optimization Efficiency.} 
To validate the effectiveness of our hardware-aware optimizations, we conduct a micro-benchmark comparing our custom fused \texttt{Int4DotBinary} GEMV kernel against a naive CUDA implementation. We vary the Batch Size, Number of tokens, Number of Heads, and Head Size. As shown in Table~\ref{tab:kernel_ablation}, our optimized kernel delivers consistent acceleration across all configurations, achieving up to $3.43 \times$ speedup. These results prove that techniques such as packed binary storage and shared memory tiling effectively maximize memory bandwidth utilization and instruction throughput.

\begin{figure}[t]
  \begin{center}
    \centerline{\includegraphics[width=0.9\columnwidth]{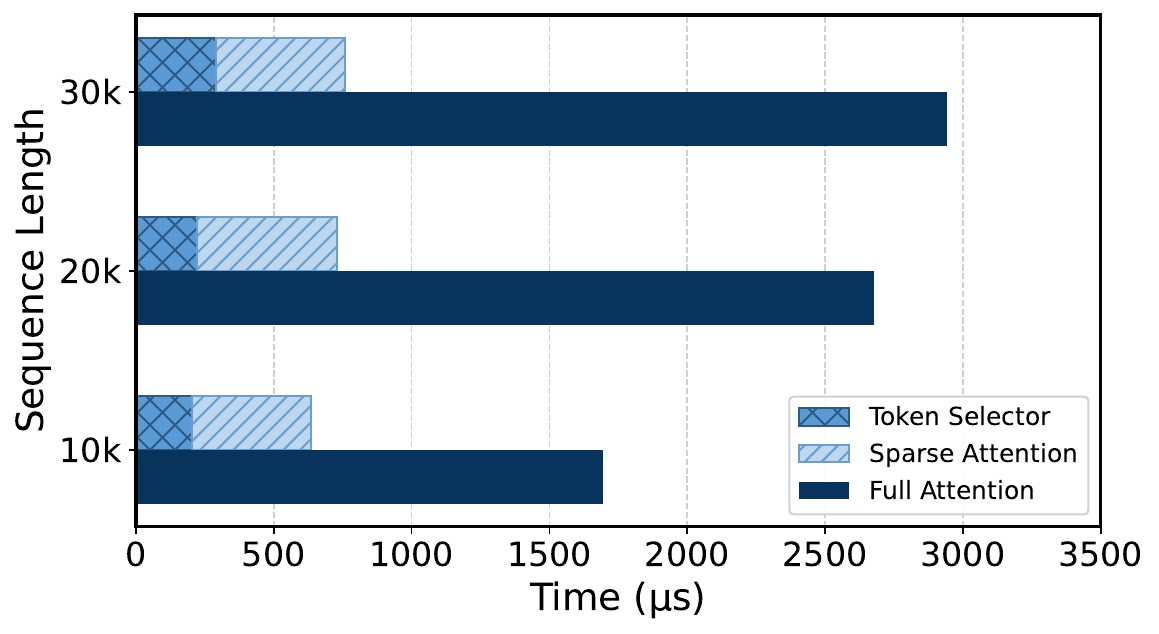} }
    \caption{
        Time Breakdown for RabitQCache. The computational overhead of the Token Selector is negligible compared to the savings gained. By filtering irrelevant tokens, RaBitQCache reduces the heavy Attention computation compared to the Full Attention baseline.
    }
    \label{fig:time_breakdown}
  \end{center}
\end{figure}

\textbf{Time Breakdown for RabitQCache.}
To deeply understand the source of our performance gains, we conduct a fine-grained latency decomposition analysis, as illustrated in Figure~\ref{fig:time_breakdown}. It is important to note that, as detailed in Section 3.4, the overhead of index construction during the prefill phase is fully masked by the computation-intensive dense attention via our asynchronous pipeline strategy. Consequently, our evaluation focuses exclusively on the latency breakdown during the decoding phase. 

Figure~\ref{fig:time_breakdown} presents the time decomposition across different context lengths. From the figure, we can observe that the computational overhead introduced by our algorithm is equivalent to only $10\%$ of the Full Attention baseline. However, RaBitQCache achieves a significant net reduction in total inference latency by drastically reducing the number of tokens required for the heavy full-precision attention computation. This high efficiency is driven by the lightweight nature of our algorithmic design combined with the high-performance implementation of our optimized kernel operators, which ensure that the retrieval overhead remains negligible relative to the computational savings.

\section{Conclusion}

In this paper, we introduced RaBitQCache, a novel sparse attention framework that addresses the memory and computational bottlenecks of long-context LLM inference through randomized rotated binary quantization and high-throughput INT4 arithmetic. By establishing a theoretically grounded unbiased estimator, our method enables precise adaptive Top-$p$ retrieval that dynamically adjusts to varying attention patterns. Extensive evaluations demonstrate that RaBitQCache effectively combines algorithmic innovation with hardware-aware optimizations to achieve significant acceleration and memory reduction compared to state-of-the-art baselines, all while preserving generation quality.

\section*{Acknowledgment}
This work is supported by the National Natural Science Foundation of China under Grant 62441230, 62461146205, 62072458 and 62472429. We also sincerely thank the authors of FlashInfer~\cite{ye2025flashinfer} and RaBitQ~\cite{gao2024rabitq}, whose prior work has provided inspiration for our research.

\section*{Impact Statement}
This paper presents work whose goal is to advance the field of Machine Learning, specifically in optimizing the inference efficiency of Large Language Models (LLMs). The proposed RaBitQCache framework significantly lowers the computational memory and latency barriers for processing long-context information. This contributes to ``Green AI'' by reducing energy consumption and facilitates the democratization of AI by enabling long-context capabilities on consumer-grade hardware.

However, as RaBitQCache relies on quantization and sparse retrieval approximation, there remains a theoretical possibility of information loss, despite the rigorous error bounds established in our work. In high-stakes applications requiring absolute precision over long documents (e.g., critical medical analysis or legal precedent review), users should remain aware of the trade-off between inference speed and the potential for recall degradation. We believe the societal benefits of efficient inference outweigh these risks, provided that deployment in sensitive domains is accompanied by appropriate validation.

%% file: 3_RabitQCache.tex
\newpage
\appendix
\onecolumn

\section{RaBitQCache Method Derivation}\label{RaBitQCache Method Derivation}

In this section, we derive the theoretical guarantees for the proxy score we use and prove that the score is an unbiased estimator of the original inner product with a theoretical error bound.

\subsection{Inner Product Decomposition}

As established in Section~\ref{sec:spa_attn}, identifying the \textbf{most significant tokens} is equivalent to finding the key vectors $k$ that yield the largest inner product with a given query vector $q$. To lay the groundwork for subsequent analysis based on the Johnson-Lindenstrauss (JL) lemma, we first normalize all query and key vectors. Let the original query and key vectors be $\mathbf{q}, \mathbf{k} \in \mathbb{R}^D$. In long-context inference scenarios, the number of tokens generated during the prefilling phase vastly exceeds that of the decoding phase. We therefore leverage this property by treating the centroids of the prefilling Q and K vectors, denoted as $\mathbf{c}_q$ and $\mathbf{c}_k$, as robust approximations for the global centroids of all data. The normalized vectors are then defined as $\mathbf{q_c} = \frac{\mathbf{q} - \mathbf{c}_q}{||\mathbf{q} - \mathbf{c}_q||}$ and $\mathbf{k_c} = \frac{\mathbf{k} - \mathbf{c}_k}{||\mathbf{k} - \mathbf{c}_k||}$. This re-centering step ensures a zero-mean data distribution, which is instrumental for the effectiveness of the subsequent quantization process. The original inner product $\langle \mathbf{q}, \mathbf{k} \rangle$ can be decomposed as follows:

\begin{equation}
\begin{aligned}
\langle \mathbf{q}, \mathbf{k} \rangle = &||\mathbf{q} - \mathbf{c}_q|| \cdot ||\mathbf{k} - \mathbf{c}_k|| \cdot \langle \mathbf{q}_c, \mathbf{k}_c \rangle + \langle \mathbf{q}, \mathbf{c}_k \rangle + \langle \mathbf{c}_q, \mathbf{k} \rangle - \langle \mathbf{c}_q, \mathbf{c}_k \rangle
\end{aligned}
\end{equation}

For a given query, terms such as $||\mathbf{q} - \mathbf{c}_q||$ and $\langle \mathbf{q}, \mathbf{c}_k \rangle$ are constants computed only once, with their costs amortized over all key vectors. The remaining terms involving only key vectors or centroids, namely $||\mathbf{k} - \mathbf{c}_k||$, $\langle \mathbf{c}_q, \mathbf{k} \rangle$, and $\langle \mathbf{c}_q, \mathbf{c}_k \rangle$, can all be pre-calculated. Consequently, the core computational challenge is reduced to efficiently estimating the inner product between the two unit vectors, $\langle \mathbf{q}_c, \mathbf{k}_c \rangle$. Our subsequent discussion will focus on approximating this term with minimal computational overhead.

\subsection{Randomized Quantization Framework}
Given that both $\mathbf{q}_c$ and $\mathbf{k}_c$ are unit vectors, we construct a base codebook from the $2^D$ vertices of a hypercube inscribed within a unit hypersphere, defined as $C := \left\{ \left( \pm \frac{1}{\sqrt{D}}, \dots, \pm \frac{1}{\sqrt{D}} \right) \right\} \subset \mathbb{R}^D$. To handle arbitrary data distributions and mitigate alignment issues, we introduce a random rotation by applying a random orthogonal matrix $\mathbf{P} \in \mathbb{R}^{D \times D}$ to the base codebook. The resulting randomized codebook is $C_{\text{rand}} = \{\mathbf{P}\mathbf{c} \mid \mathbf{c} \in C\}$. Since $\mathbf{P}$ is orthogonal, all codewords in $C_{\text{rand}}$ remain unit vectors. For each normalized key vector $\mathbf{k}_c$, we find its nearest neighbor $\bar{\mathbf{k}}_c = \mathbf{P}\bar{\mathbf{c}}$ in the codebook $C_{\text{rand}}$. The search for the optimal codeword $\bar{\mathbf{c}}$ simplifies to maximizing an inner product:

\begin{align}
 \bar{\mathbf{c}} &= \arg\min_{\mathbf{c} \in C} ||\mathbf{k}_c - \mathbf{P}\mathbf{c}||^2 \\
 &= \arg\min_{\mathbf{c} \in C} (||\mathbf{k}_c||^2 + ||\mathbf{P}\mathbf{c}||^2 - 2\langle \mathbf{k}_c, \mathbf{P}\mathbf{c} \rangle) \\
 &= \arg\min_{\mathbf{c} \in C} (2 - 2\langle \mathbf{P}^T\mathbf{k}_c, \mathbf{c} \rangle) \\
 &= \arg\max_{\mathbf{c} \in C} \langle \mathbf{P}^T\mathbf{k}_c, \mathbf{c} \rangle \label{c_bar_eq_sign}
\end{align}

The derivation relies on the property that $\mathbf{k}_c$ and $\mathbf{Pc}$ are unit vectors. Since all components of $\mathbf{c} \in C$ have the same absolute value, the inner product $\langle \mathbf{P}^T\mathbf{k}_c, \mathbf{c} \rangle$ is maximized when the sign of each component of $\mathbf{c}$ matches the sign of the corresponding component of $\mathbf{P}^T\mathbf{k}_c$. This insight enables highly efficient encoding. The quantized representation of $\mathbf{k}_c$ can be stored as a compact $D$-bit binary vector $\bar{\mathbf{c}}_b \in \{0, 1\}^D$, where the $i$-th bit is 1 if the $i$-th component of $\mathbf{P}^T\mathbf{k}_c$ is positive, and 0 otherwise. The original codeword $\bar{\mathbf{c}}$ can be losslessly recovered from $\bar{\mathbf{c}}_b$ via $\bar{\mathbf{c}} = (2\bar{\mathbf{c}}_b - \mathbf{1}_D) / \sqrt{D}$.

In summary, during the index construction phase, we compute and store two pieces of information for each key vector: (1) the quantized code $\bar{\mathbf{c}}_b$, a $D$-bit binary vector, and (2) a correction factor $\langle \bar{\mathbf{c}}', \mathbf{P}^T\mathbf{k}_c \rangle$, where $\bar{\mathbf{c}}' = (2\bar{\mathbf{c}}_b - \mathbf{1}_D)$, which will be explained in the following section.

\subsection{ Online Inner Product Estimation}\label{Online Inner Product Estimation}
At query time, we leverage the pre-computed quantized representations to estimate the target inner product $\langle \mathbf{q}_c, \mathbf{k}_c \rangle$.


\begin{figure}[ht]

  \begin{center}
    \centerline{\includegraphics[width=0.35\columnwidth]{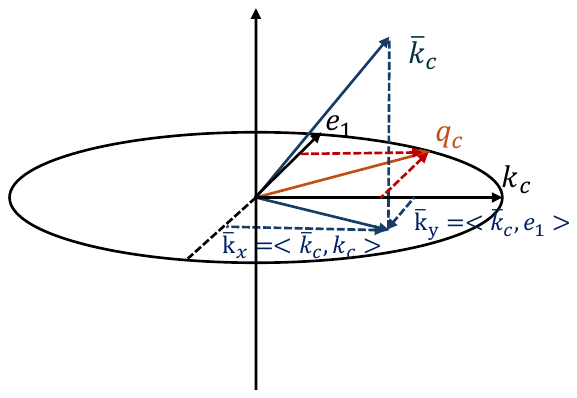}}
    \caption{
      The geometric relationship of $\mathbf{q}_c,\mathbf{k}_c,\mathbf{e}_1$.
    }
    \label{fig:qke.pdf}
  \end{center}
\end{figure}

\subsubsection{An Unbiased Estimator}
As shown in Figure~\ref{fig:qke.pdf}, the geometric relationship between $\langle \mathbf{q}_c, \mathbf{k}_c \rangle$ and $\langle \bar{\mathbf{k}}_c, \mathbf{q}_c \rangle$ can be expressed by projecting $\mathbf{q}_c$ onto the subspace spanned by $\mathbf{k}_c$ and an orthogonal vector $\mathbf{e}_1$. This yields the relation:

\begin{align}\label{kbar_cdotq_c}
\langle\overline{\mathbf{k}}_c, \mathbf{q}_c\rangle &= \overline{k}_x\cdot \langle \mathbf{k}_c, \mathbf{q}_c\rangle +\overline{k}_y\cdot \sqrt{1-\langle \mathbf{k}_c, \mathbf{q}_c\rangle^2} \\
&= \langle\overline{\mathbf{k}}_c, \mathbf{k}_c\rangle \cdot \langle \mathbf{k}_c, \mathbf{q}_c\rangle + \langle\overline{\mathbf{k}}_c, \mathbf{e}_1\rangle\sqrt{1-\langle \mathbf{k}_c, \mathbf{q}_c\rangle^2}\label{kbar_cdotq_c3}\end{align}

where $\mathbf{e}_1 = \frac{\mathbf{k}_c - \langle \mathbf{k}_c, \mathbf{q}_c \rangle \mathbf{q}_c}{\sqrt{1 - {\langle \mathbf{k}_c, \mathbf{q}_c \rangle}^2}}$
 is a unit vector orthogonal to $\mathbf{k}_c$ and coplanar with $\mathbf{q}_c$ and $\mathbf{k}_c$. Although a 2D projection is shown for intuition, this equation holds in any dimension $D \ge 2$. Solving this equation for $\langle \mathbf{q}_c, \mathbf{k}_c \rangle$ directly is complex. However, we observe that $\bar{\mathbf{k}}_c$ is determined by a random rotation, and $\mathbf{e}_1$ is defined in the orthogonal complement of $\mathbf{k}_c$. These vectors are thus statistically independent with respect to the random choice of $\mathbf{P}$. As previously mentioned, two high-dimensional random unit vectors tend to be orthogonal (Section~\ref{sec:jl}), this leads to $\mathbb{E}[\langle\overline{\mathbf{k}}_c, \mathbf{e}_1\rangle]=0$. Taking the expectation of the equation above, we arrive at the following theorem.

\begin{theorem}\label{EandP}
An unbiased estimator for $\langle \mathbf{k}_c, \mathbf{q}_c \rangle$ is given by:
\begin{align}\mathbb{E}\left[\frac{\langle \bar{\mathbf{k}}_c, \mathbf{q}_c \rangle}{\langle \bar{\mathbf{k}}_c, \mathbf{k}_c \rangle} \right] &= \langle \mathbf{k}_c, \mathbf{q}_c \rangle \label{EandP1} \\
\left| \frac{\langle \bar{\mathbf{k}}_c, \mathbf{q}_c \rangle}{\langle \bar{\mathbf{k}}_c, \mathbf{k}_c \rangle} - \langle \mathbf{k}_c, \mathbf{q}_c \rangle \right| &= O\left(\frac{1}{\sqrt{D}}\right)  \text{with high probability.}\label{EandP2}\end{align}
\end{theorem}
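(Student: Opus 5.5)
Our plan follows the RaBitQ analysis, with $\mathbf{P}$ Haar-distributed on $O(D)$. Decompose $\mathbf{q}_c = \langle \mathbf{k}_c,\mathbf{q}_c\rangle \mathbf{k}_c + \sqrt{1-\langle \mathbf{k}_c,\mathbf{q}_c\rangle^2}\,\mathbf{e}_1$, where $\mathbf{e}_1$ is the unit vector orthogonal to $\mathbf{k}_c$ in $\mathrm{span}(\mathbf{k}_c,\mathbf{q}_c)$. (The intended normalization is $\mathbf{q}_c-\langle \mathbf{k}_c,\mathbf{q}_c\rangle\mathbf{k}_c$; the roles of $\mathbf{k}_c$ and $\mathbf{q}_c$ in the displayed definition should be swapped.) Dividing \eqref{kbar_cdotq_c3} by $\langle \bar{\mathbf{k}}_c,\mathbf{k}_c\rangle$ gives the exact identity
\begin{equation*}
\frac{\langle \bar{\mathbf{k}}_c,\mathbf{q}_c\rangle}{\langle \bar{\mathbf{k}}_c,\mathbf{k}_c\rangle}-\langle \mathbf{k}_c,\mathbf{q}_c\rangle=\sqrt{1-\langle \mathbf{k}_c,\mathbf{q}_c\rangle^2}\,\frac{\langle \bar{\mathbf{k}}_c,\mathbf{e}_1\rangle}{\langle \bar{\mathbf{k}}_c,\mathbf{k}_c\rangle}.
\end{equation*}
Both claims therefore reduce to controlling the ratio $R=\langle \bar{\mathbf{k}}_c,\mathbf{e}_1\rangle/\langle \bar{\mathbf{k}}_c,\mathbf{k}_c\rangle$. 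We need $\mathbb{E}[R]=0$ and $|R|=O(1/\sqrt{D})$ with high probability.

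\textbf{Unbiasedness.} Work in rotated coordinates $\mathbf{x}=\mathbf{P}^T\mathbf{k}_c$ and $\mathbf{y}=\mathbf{P}^T\mathbf{e}_1$. Since $\bar{\mathbf{c}}=\mathrm{sign}(\mathbf{x})/\sqrt{D}$ and $\mathbf{P}$ is orthogonal, $R=\langle \bar{\mathbf{c}},\mathbf{y}\rangle/\langle \bar{\mathbf{c}},\mathbf{x}\rangle$. Under Haar measure, the pair $(\mathbf{x},\mathbf{y})$ is a uniformly random orthonormal pair. Conditioned on $\mathbf{x}$, the vector $\mathbf{y}$ is uniform on the unit sphere of $\mathbf{x}^\perp$, so its law is invariant under the reflection $\mathbf{y}\mapsto-\mathbf{y}$. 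The denominator and $\bar{\mathbf{c}}$ depend only on $\mathbf{x}$, and the denominator is positive a.s. because it equals $\|\mathbf{x}\|_1/\sqrt{D}\ge 1/\sqrt{D}$. Hence $\mathbb{E}[R\mid\mathbf{x}]=0$, and therefore $\mathbb{E}[R]=0$, which gives \eqref{EandP1}. Integrability also follows from this lower bound, since $|R|\le\sqrt{D}$. This symmetry argument replaces the paper's heuristic ``near-orthogonality'' with an exact statement.

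\textbf{Concentration.} Split $R$ into numerator and denominator.
\begin{itemize}
\item \emph{Numerator.} Conditioned on $\mathbf{x}$, $\langle \bar{\mathbf{c}},\mathbf{y}\rangle$ is the inner product of the fixed unit vector $\bar{\mathbf{c}}$ with a uniform unit vector of the $(D-1)$-dimensional sphere in $\mathbf{x}^\perp$. Its relevant component has norm at most $1$. Standard spherical concentration (Lévy's lemma, the quantitative form of the JL corollary in Section~\ref{sec:jl}) then gives $|\langle \bar{\mathbf{c}},\mathbf{y}\rangle|\le t/\sqrt{D}$ with probability at least $1-2e^{-ct^2}$.
\item \emph{Denominator.} Write $\langle \bar{\mathbf{c}},\mathbf{x}\rangle=\|\mathbf{x}\|_1/\sqrt{D}$ with $\mathbf{x}$ uniform on $S^{D-1}$. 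Representing $\mathbf{x}=\mathbf{g}/\|\mathbf{g}\|$ with $\mathbf{g}$ standard Gaussian, we get $\|\mathbf{g}\|_1/D\to\sqrt{2/\pi}$ and $\|\mathbf{g}\|/\sqrt{D}\to1$, both with sub-Gaussian deviations. Hence $\langle \bar{\mathbf{c}},\mathbf{x}\rangle$ concentrates around $\sqrt{2/\pi}\approx0.798$, and with probability $1-e^{-cD}$ it is at least $0.7$.
\end{itemize}
A union bound then gives $|R|\le t/(0.7\sqrt{D})$ with probability at least $1-2e^{-ct^2}-e^{-cD}$. Since $\sqrt{1-\langle \mathbf{k}_c,\mathbf{q}_c\rangle^2}\le1$, this yields \eqref{EandP2}.

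The main obstacle is making the conditional-independence step rigorous. Specifically, we must verify that, given $\mathbf{P}^T\mathbf{k}_c$, the vector $\mathbf{P}^T\mathbf{e}_1$ is uniform on the orthogonal sphere. This follows from invariance of the Haar measure under right multiplication by the stabilizer of $\mathbf{k}_c$. The denominator's lower bound via Gaussian representation is routine by comparison.
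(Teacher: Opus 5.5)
Your proof is correct and shares the paper's skeleton. Both reduce the two claims to the ratio $\langle\bar{\mathbf{k}}_c,\mathbf{e}_1\rangle/\langle\bar{\mathbf{k}}_c,\mathbf{k}_c\rangle$, and both get the needed symmetry from Haar invariance under the stabilizer of $\mathbf{k}_c$. In the paper's Lemma~\ref{samedistribute} this is $\mathbf{P}\mapsto\mathbf{R}\mathbf{P}$ with $\mathbf{R}\mathbf{k}_c=\mathbf{k}_c$. In your notation it is right multiplication of $\mathbf{P}^T$, not of $\mathbf{P}$, since $\mathbf{P}\mapsto\mathbf{P}\mathbf{R}$ would move $\mathbf{x}$ to $\mathbf{R}^T\mathbf{x}$.

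The two arguments differ in how much they extract from that invariance, and in the denominator. The paper conditions on the scalar $c_{\text{par}}=\langle\bar{\mathbf{k}}_c,\mathbf{k}_c\rangle$ and proves full independence of $X_1\sim p_{D-1}$ from $c_{\text{par}}$. It then factors the expectation, tacitly using integrability of $\sqrt{1-c_{\text{par}}^2}/c_{\text{par}}$. You instead condition on the whole rotated key $\mathbf{x}$ and need only the reflection $\mathbf{y}\mapsto-\mathbf{y}$ for unbiasedness. You also make integrability explicit via $c_{\text{par}}\ge 1/\sqrt{D}$.

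The more substantive difference is in the concentration step. The paper's chain of inequalities only shows that, with probability at least $1-2e^{-c_0t^2}$, the error is at most $\frac{\sqrt{1-c_{\text{par}}^2}}{c_{\text{par}}}\cdot\frac{t}{\sqrt{D-1}}$. That is a bound at a \emph{random} scale, and the paper then asserts $O(1/\sqrt{D})$ without ever showing that $c_{\text{par}}$ stays away from $0$. Your Gaussian-representation argument, showing that $c_{\text{par}}=\|\mathbf{x}\|_1/\sqrt{D}$ concentrates near $\sqrt{2/\pi}$ (the same fact RaBitQ relies on), supplies exactly this missing step. Your version therefore genuinely proves the stated bound, with an explicit failure probability of order $e^{-ct^2}+e^{-cD}$.

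Your correction of $\mathbf{e}_1$ is also right. The paper's displayed $\mathbf{k}_c-\langle\mathbf{k}_c,\mathbf{q}_c\rangle\mathbf{q}_c$ is orthogonal to $\mathbf{q}_c$, not to $\mathbf{k}_c$, and Eq.~\eqref{kbar_cdotq_c3} requires the swapped vector.
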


The proof is provided in the Appendix~\ref{proof_a1}. With this result, the estimation problem reduces to computing the numerator $\langle \bar{\mathbf{k}}_c, \mathbf{q}_c \rangle$, as the denominator is the pre-computed correction factor.

\subsubsection{ Fast Computation via Low-Precision Arithmetic}\label{Fast Computation via Low-Precision Arithmetic}

The final step is to compute $\langle \bar{\mathbf{k}}_c, \mathbf{q}_c \rangle$ efficiently. By applying the property of orthogonal matrices, we have $\langle \bar{\mathbf{k}}_c, \mathbf{q}_c \rangle = \langle \mathbf{P}\bar{\mathbf{c}}, \mathbf{q}_c \rangle = \langle \bar{\mathbf{c}}, \mathbf{P}^T\mathbf{q}_c \rangle$. Let $\mathbf{q}' = \mathbf{P}^T\mathbf{q}_c$. The computation now involves a dot product between $\bar{\mathbf{c}}$ (a vector of $\pm 1/\sqrt{D}$) and the floating-point vector $\mathbf{q}'$. Since our overall method is an estimation, we can further accelerate this computation by quantizing $\mathbf{q}'$, provided that the introduced quantization error is asymptotically smaller than the existing estimation error of $O(1/\sqrt{D})$. The following theorem formalizes this.

\begin{theorem}\label{QUANTIZE_TH}
If we quantize $\mathbf{q}'$ to a $B_q$-bit integer vector $\bar{\mathbf{q}}$, selecting $B_q = \Omega(\log \log D)$ suffices to guarantee that $| \langle\bar{\mathbf{c}}, \mathbf{q}'\rangle - \langle\bar{\mathbf{c}}, \bar{\mathbf{q}}\rangle | = O\left(\frac{1}{\sqrt{D}}\right)$ with high probability.
\end{theorem}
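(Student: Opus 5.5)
The argument has three parts: model the quantization error coordinatewise, bound its inner product with $\bar{\mathbf{c}}$ by a concentration inequality, and then control the step size using the fact that $\mathbf{q}'$ is a randomly rotated unit vector.

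\emph{Error model.} Write the reconstructed query as $\bar{\mathbf{q}} = q_l\mathbf{1}_D + \Delta\,\bar{\mathbf{q}}_u$. Here $\Delta = (q_r - q_l)/(2^{B_q}-1)$ and $q_l = \min_i q'_i$, $q_r = \max_i q'_i$. The error vector $\boldsymbol{\epsilon} = \bar{\mathbf{q}} - \mathbf{q}'$ then satisfies $|\epsilon_i| \le \Delta$ for every coordinate. Following RaBitQ, I will use \emph{randomized} rounding: each coordinate is rounded up or down with probabilities chosen so that $\mathbb{E}[\epsilon_i]=0$. The $\epsilon_i$ are then independent, mean zero, and bounded in $[-\Delta,\Delta]$, conditionally on $\mathbf{q}'$. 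With deterministic rounding one can instead argue that the rounding errors behave like independent uniform noise, but I will state the result for the randomized scheme.

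\emph{Concentration.} The quantity to bound is
\[
\langle \bar{\mathbf{c}}, \bar{\mathbf{q}}\rangle - \langle \bar{\mathbf{c}}, \mathbf{q}'\rangle = \sum_{i=1}^D \bar{c}_i\epsilon_i ,
\]
where $|\bar{c}_i| = 1/\sqrt{D}$. Conditioning on $\mathbf{P}$ fixes both $\bar{\mathbf{c}}$ and $\mathbf{q}'$, so this is a sum of independent terms, each bounded by $\Delta/\sqrt{D}$. Hoeffding's inequality gives
\[
\Pr\left[\left|\sum_i \bar c_i\epsilon_i\right| > t\right] \le 2\exp\!\left(-\frac{t^2}{2\Delta^2}\right).
\]
Hence, with probability at least $1-\delta$, the error is at most $\Delta\sqrt{2\log(2/\delta)}$.

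\emph{Bounding $\Delta$ (main obstacle).} It remains to show that $\Delta\sqrt{\log(1/\delta)} = O(1/\sqrt{D})$. Since $\mathbf{P}$ is a uniformly random orthogonal matrix and $\mathbf{q}_c$ is a unit vector, $\mathbf{q}'$ is uniform on the unit sphere $S^{D-1}$.
\begin{itemize}
\item By Lévy concentration for a single coordinate, $\Pr[|q'_i| > s] \le 2e^{-Ds^2/2}$.
\item A union bound over the $D$ coordinates gives $\|\mathbf{q}'\|_\infty = O\big(\sqrt{\log(D/\delta)/D}\big)$ with probability $1-\delta$.
\end{itemize}
Therefore
\[
\Delta \le \frac{2\|\mathbf{q}'\|_\infty}{2^{B_q}-1} = O\!\left(\frac{\sqrt{\log(D/\delta)}}{2^{B_q}\sqrt{D}}\right).
\]
Combining this with the Hoeffding step, the total error is
\[
O\!\left(\frac{\sqrt{\log(D/\delta)\,\log(1/\delta)}}{2^{B_q}\sqrt{D}}\right).
\]

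\emph{Choice of $B_q$.} Fix a constant failure probability $\delta$, or one that is polylogarithmic in $D$; "high probability" is to be read in that sense. The numerator is then $O(\sqrt{\log D})$. Choosing $2^{B_q} \ge c\sqrt{\log D}$, i.e.\ $B_q \ge \tfrac12\log_2\log D + O(1) = \Omega(\log\log D)$, makes the total error $O(1/\sqrt{D})$, which matches the estimation error in Theorem~\ref{EandP}. A final union bound over the two failure events completes the argument. The one delicate point is the dependence between $\bar{\mathbf{c}}$ and $\mathbf{q}'$ through $\mathbf{P}$. Conditioning on $\mathbf{P}$ handles it, because the Hoeffding bound holds for any fixed signs $\bar{\mathbf{c}}$, and only the $\ell_\infty$ bound on $\mathbf{q}'$ uses the randomness of $\mathbf{P}$.
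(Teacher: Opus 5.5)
Your proposal is correct and follows essentially the same route as the paper: randomized rounding plus Hoeffding gives an $O(\Delta)$ error, a per-coordinate sub-Gaussian tail with a union bound gives $\Delta=O(\sqrt{\log D/D}/2^{B_q})$, and then $2^{B_q}=\Omega(\sqrt{\log D})$; your explicit conditioning on $\mathbf{P}$ and tracking of $\delta$ usefully spell out what the paper leaves implicit. One small slip: if $\delta$ is inverse-polylogarithmic (not ``polylogarithmic'') in $D$, the numerator becomes $O(\sqrt{\log D\,\log\log D})$ rather than $O(\sqrt{\log D})$, so take $2^{B_q}\ge c\log D$ instead (this also covers $\delta=D^{-O(1)}$), which still gives $B_q=\Omega(\log\log D)$.
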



The proof is provided in the Appendix~\ref{proof_a2}. We then quantize $\mathbf{q}'$ using a uniform scalar quantizer. Let $[q_l, q_r]$ be the value range of $\mathbf{q}'$ and $\Delta = (q_r - q_l) / (2^{B_q}-1)$. The quantized vector $\bar{\mathbf{q}}$ is obtained from a $B_q$-bit unsigned integer vector $\bar{\mathbf{q}}_u=\text{Round}((\mathbf{q}'-q_l)/\Delta)$ as $\bar{\mathbf{q}} = \Delta \cdot \bar{\mathbf{q}}_u + q_l \cdot \mathbf{1}_D$. The dot product is then approximated as:

\begin{align}
\langle \bar{\mathbf{k}}_c, \mathbf{q}_c \rangle &\approx\langle \bar{\mathbf{c}}, \bar{\mathbf{q}} \rangle \\
&= \left\langle \frac{2\bar{\mathbf{c}}_b - \mathbf{1}_D}{\sqrt{D}}, \Delta \cdot \bar{\mathbf{q}}_u + q_l \cdot \mathbf{1}_D \right\rangle\label{kc_bar_dot_q_c1} \\
& = \frac{2\Delta}{\sqrt{D}} \langle \bar{\mathbf{c}}_b, \bar{\mathbf{q}}_u \rangle + \frac{2q_l}{\sqrt{D}} \sum_{i=1}^{D} \bar{\mathbf{c}}_b[i] 
  - \frac{\Delta}{\sqrt{D}} \sum_{i=1}^{D} \bar{\mathbf{q}}_u[i] - \sqrt{D} \cdot q_l  \label{kc_bar_dot_q_c2}
\end{align}

Since the dimension D in LLM is generally greater than 128~\cite{yang2025qwen2,dubey2024llama}, a bit-width of $B_q=4$ is sufficient to make the error from this quantization negligible. In the final expression, the first term is an inner product between a binary vector and an INT4 vector. The second term requires a popcount operation on the binary vector. The last two terms are scalars computed once per query. All these operations can be executed with exceptional efficiency on modern GPUs.

Since the estimated inner product we need to calculate is 
\begin{align}est(\langle \mathbf{k}_c, \mathbf{q}_c \rangle)=\frac{\langle \bar{\mathbf{c}}, \bar{\mathbf{q}} \rangle}{\langle \bar{\mathbf{k}}_c, \mathbf{k}_c \rangle}
=\frac{\langle \bar{\mathbf{c}}, \bar{\mathbf{q}} \rangle}{\langle \bar{\mathbf{c}}, \mathbf{P}^T\mathbf{k}_c \rangle}=\frac{\langle \bar{\mathbf{c}}', \bar{\mathbf{q}} \rangle}{\langle \bar{\mathbf{c}}', \mathbf{P}^T\mathbf{k}_c \rangle}\label{estk_cq_c}\end{align}

The last equation makes use of $\bar{\mathbf{c}}'=\sqrt{D}\bar{\mathbf{c}}$. Combining~\eqref{kc_bar_dot_q_c1},~\eqref{kc_bar_dot_q_c2} and ~\eqref{estk_cq_c}, we can obtain our formula for computing $\langle \bar{\mathbf{c}}', \bar{\mathbf{q}} \rangle$:
\begin{align}
\langle \bar{\mathbf{c}}', \bar{\mathbf{q}} \rangle&=\left\langle 2\bar{\mathbf{c}}_b - \mathbf{1}_D, \Delta \cdot \bar{\mathbf{q}}_u + q_l \cdot \mathbf{1}_D \right\rangle \\
&= 2\Delta \langle \bar{\mathbf{c}}_b, \bar{\mathbf{q}}_u \rangle + 2q_l \sum_{i=1}^{D} \bar{\mathbf{c}}_b[i] - \Delta \sum_{i=1}^{D} \bar{\mathbf{q}}_u[i] - D \cdot q_l
\end{align}


\textbf{Remark (Support for Top-$p$ Selection).} The unbiased nature of Eq.~\ref{EandP1} is a distinct advantage over distance-based heuristics (e.g., K-means centroids). Because $E[\text{score}] = \langle k_c, q_c \rangle$, our proxy scores are numerically comparable to the true dot products. This allows us to approximate the Softmax denominator and employ threshold-based or cumulative-probability-based (Top-$p$) selection strategies, which are more robust to varying sparsity patterns than a fixed Top-$k$ approach.

%% file: x_Appendix.tex
\subsection{PROOF OF THEOREM in~\ref{RaBitQCache Method Derivation}}

Our derivation below is inspired by RaBitQ~\cite{gao2024rabitq}.

\subsubsection{PROOF OF THEOREM~\ref{EandP}}\label{proof_a1}


\begin{lemma}\label{circledistribute} ~\cite{papaspiliopoulos2020high} For a D-dimensional random vector $\mathbf{x} = (\mathbf{x}[1], \mathbf{x}[2],\allowbreak \ldots, \mathbf{x}[D])$ which follows the uniform distribution on the unit sphere, the probability density function of its every coordinate $\mathbf{x}[1], \mathbf{x}[2],\allowbreak \ldots, \mathbf{x}[D]$ is given as
\begin{equation}
p_D(x) = \frac{\Gamma(\frac{D}{2})}{\sqrt{\pi}\Gamma(\frac{D-1}{2})}(1 - x^2)^{\frac{D-3}{2}}, x \in [-1, 1] \tag{31}
\end{equation}
where $\Gamma(\cdot)$ is the Gamma function. The tail bound is given as
\begin{equation}
\mathbb{P}\left\{ |\mathbf{x}[i]| > \frac{t}{\sqrt{D}} \right\} \le 2 \exp(-c_0 t^2) \tag{32}
\end{equation}
where $c_0$ is a constant, $i = 1, 2, \ldots, D$.
\end{lemma}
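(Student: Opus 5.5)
The plan is to prove the density formula by a slicing (co-area) computation on the sphere $S^{D-1}$, and then derive the tail bound by comparing $(1-x^2)^{\frac{D-3}{2}}$ with a Gaussian. By rotational invariance of the uniform measure, every coordinate has the same law, so it suffices to treat $\mathbf{x}[1]$.

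\emph{Density.} Fix $x \in (-1,1)$. The level set $\{\mathbf{y}\in S^{D-1} : \mathbf{y}[1]=x\}$ is a $(D-2)$-sphere of radius $\sqrt{1-x^2}$, so its $(D-2)$-dimensional area is $|S^{D-2}|(1-x^2)^{\frac{D-2}{2}}$. The co-area factor of the map $\mathbf{y}\mapsto \mathbf{y}[1]$ on the sphere is $1/\sqrt{1-x^2}$: the gradient of $\mathbf{y}[1]$ tangent to the sphere has norm $\sqrt{1-\mathbf{y}[1]^2}$. Hence the marginal density is proportional to $(1-x^2)^{\frac{D-3}{2}}$. For the normalizing constant I will substitute $u=x^2$, which gives
\[
\int_{-1}^1 (1-x^2)^{\frac{D-3}{2}}\,dx = B\left(\tfrac12,\tfrac{D-1}{2}\right) = \frac{\sqrt{\pi}\,\Gamma(\frac{D-1}{2})}{\Gamma(\frac D2)},
\]
and this is exactly the reciprocal of the stated constant. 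As a cross-check, I could instead write $\mathbf{x}=\mathbf{g}/\|\mathbf{g}\|$ with $\mathbf{g}$ standard Gaussian. Then $\mathbf{x}[1]^2 \sim \mathrm{Beta}(\frac12,\frac{D-1}{2})$, and the change of variables gives the same density.

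\emph{Tail bound.} Write $C_D=\Gamma(\frac D2)/(\sqrt{\pi}\,\Gamma(\frac{D-1}{2}))$. By Gautschi's inequality, $\Gamma(\frac D2)/\Gamma(\frac{D-1}{2})\le \sqrt{D/2}$, so $C_D \le \sqrt{D/(2\pi)}$. For $D\ge 4$, the inequality $1-x^2\le e^{-x^2}$ gives $(1-x^2)^{\frac{D-3}{2}}\le e^{-\frac{(D-3)x^2}{2}}$. Therefore
\[
\mathbb{P}\left\{|\mathbf{x}[1]|>\tfrac{t}{\sqrt D}\right\}\le 2\sqrt{\tfrac{D}{2\pi}}\int_{t/\sqrt D}^{\infty} e^{-\frac{(D-3)x^2}{2}}\,dx .
\]
Substituting $s=\sqrt{D-3}\,x$, the right-hand side becomes $2\sqrt{\frac{D}{D-3}}\;\mathbb{P}\{Z> t\sqrt{(D-3)/D}\}$ for a standard normal $Z$. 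Since $\frac{D-3}{D}\ge \frac14$ and $\sqrt{D/(D-3)}\le 2$, the standard Gaussian tail bound gives an estimate of the form $4e^{-t^2/8}$.

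\emph{Cleaning up the constants.} The main obstacle is bookkeeping: the bound must have the exact form $2\exp(-c_0t^2)$ with $c_0$ independent of $D$. I will absorb the extra prefactor by case analysis on $t$. For $t^2\le \ln 2/c_0$, the right-hand side $2e^{-c_0t^2}$ is at least $1$, so the claim is trivial. For larger $t$, the factor $4e^{-t^2/8}$ is at most $2e^{-t^2/16}$ once $t^2 \ge 16\ln 2$. Choosing $c_0=1/16$ covers all $D\ge 4$.

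For $D=2,3$, I will use that $|\mathbf{x}[1]|\le 1$, so the probability is zero once $t>\sqrt D$. For $t\le \sqrt D\le\sqrt3$, the right-hand side is at least $1$ provided $c_0\le \ln 2/3$, which $c_0=1/16$ satisfies. This yields a single universal $c_0$ and completes the lemma.
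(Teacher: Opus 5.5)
Your proof is correct. The co-area computation (tangential gradient norm $\sqrt{1-\mathbf{y}[1]^2}$, level-set area $|S^{D-2}|(1-x^2)^{\frac{D-2}{2}}$) together with the Beta-integral normalization reproduces the stated density exactly. The Gautschi bound $C_D\le\sqrt{D/(2\pi)}$, the comparison $(1-x^2)^{\frac{D-3}{2}}\le e^{-\frac{(D-3)x^2}{2}}$, and your case analysis in $t$ and for $D\in\{2,3\}$ all check out.

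The paper proves neither part. Lemma~\ref{circledistribute} is simply quoted with a citation to Vershynin's textbook. There the sub-Gaussian tail is obtained without ever writing down the density (e.g.\ from $\mathbf{x}=\mathbf{g}/\|\mathbf{g}\|$ plus concentration of $\|\mathbf{g}\|$, or from concentration of measure on the sphere), and $c_0$ is left as an unspecified absolute constant.

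Your direct calculus argument on the explicit marginal buys two things. First, it is self-contained: the density formula is established, not assumed. Second, it gives an explicit $c_0=1/16$ that is uniform over all $D\ge 2$. That uniformity in $D$ is exactly what the paper relies on when it applies the bound to $p_{D-1}$ in the proof of Theorem~\ref{EandP}, and inside a union bound over the $D$ coordinates in Lemma~\ref{proof_of_Delta_range}. The textbook route, in exchange, needs no special-function estimates, and its concentration-of-measure form extends to arbitrary Lipschitz functions of $\mathbf{x}$, not just coordinates.

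Two small points. You use $t\ge 0$ implicitly and should say so, since for large negative $t$ the inequality is false as written. Also, the sharper bound $\mathbb{P}\{Z>a\}\le\frac{1}{2}e^{-a^2/2}$ for $a\ge 0$ gives $2e^{-t^2/8}$ directly when $D\ge 4$. That removes the clean-up step and lets $c_0=1/8$ work for every $D\ge 2$.
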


\begin{lemma}\label{samedistribute} Let $\mathbf{k}_c$ and $\mathbf{e}_1$ be two unit vectors, where $\mathbf{k}_c \perp \mathbf{e}_1$. Let $P$ be a random orthogonal matrix drawn uniformly from the Haar measure, and let $\mathcal{C}$ be a deterministic codebook. Define $\bar{\mathbf{c}} = {\arg\max}_{\mathbf{c} \in \mathcal{C}} \langle P^T\mathbf{k}_c, \mathbf{c} \rangle$ and $\bar{\mathbf{k}}_c = P\bar{\mathbf{c}}$. Then, the joint distribution of the random vector $(\langle\bar{\mathbf{k}}_c, \mathbf{k}_c\rangle, \langle\bar{\mathbf{k}}_c, \mathbf{e}_1\rangle)$ is identical to that of 

\begin{align}(\langle\bar{\mathbf{k}}_c, \mathbf{k}_c\rangle, \sqrt{1-\langle\bar{\mathbf{k}}_c, \mathbf{k}_c\rangle^2} \cdot X_1)\end{align}
, where $X_1$ is independent of $\langle\bar{\mathbf{k}}_c, \mathbf{k}_c\rangle$ and $X_1 \sim p_{D-1}$.
\end{lemma}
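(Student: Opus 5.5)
The plan is to derive the statement purely from the rotational invariance of the Haar measure, with no explicit computation involving the codebook $\mathcal{C}$. Write $a := \langle\bar{\mathbf{k}}_c, \mathbf{k}_c\rangle$.

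\textbf{Step 1: invariance of $\bar{\mathbf{k}}_c$ under the stabilizer of $\mathbf{k}_c$.} Let $R$ be any orthogonal matrix with $R\mathbf{k}_c = \mathbf{k}_c$. Replacing $P$ by $RP$ gives $(RP)^T\mathbf{k}_c = P^T R^T \mathbf{k}_c = P^T\mathbf{k}_c$. So the maximizer $\bar{\mathbf{c}}$ is unchanged, with ties broken by the same deterministic rule. The reconstructed vector becomes $RP\bar{\mathbf{c}} = R\bar{\mathbf{k}}_c$. Since $RP$ is again Haar distributed, $\bar{\mathbf{k}}_c$ and $R\bar{\mathbf{k}}_c$ have the same law for every such $R$.

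\textbf{Step 2: polar decomposition around $\mathbf{k}_c$.} Every codeword is a unit vector, so $\bar{\mathbf{k}}_c$ is a unit vector. Decompose
\begin{equation*}
\bar{\mathbf{k}}_c = a\,\mathbf{k}_c + \sqrt{1-a^2}\,\mathbf{u},
\end{equation*}
where $\mathbf{u}$ is a unit vector in the $(D-1)$-dimensional subspace $\mathbf{k}_c^{\perp}$. On the event $|a|=1$ the vector $\mathbf{u}$ is undefined. On that event I would set $\mathbf{u}$ to an independent uniform vector on the unit sphere of $\mathbf{k}_c^{\perp}$. This event has probability zero anyway: $a = \|P^T\mathbf{k}_c\|_1/\sqrt{D}$, which equals $1$ only when $P^T\mathbf{k}_c$ lies exactly on a codeword direction.

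A stabilizer $R$ acts on the pair $(a,\mathbf{u})$ as $(a, R\mathbf{u})$, since $R$ fixes $\mathbf{k}_c$ and maps $\mathbf{k}_c^\perp$ to itself. Step 1 therefore says that $(a,\mathbf{u})$ and $(a,R\mathbf{u})$ have the same joint law for every orthogonal $R$ on $\mathbf{k}_c^\perp$.

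\textbf{Step 3: independence and uniformity of $\mathbf{u}$.} This is the main point to make rigorous. Draw $R$ Haar-distributed on $O(\mathbf{k}_c^\perp)\cong O(D-1)$, independently of $P$. By Step 2, applied conditionally on $R$ and then averaged over $R$, the pair $(a,\mathbf{u})$ has the same law as $(a,R\mathbf{u})$. Conditional on $(a,\mathbf{u})$, the vector $R\mathbf{u}$ is uniform on the unit sphere of $\mathbf{k}_c^\perp$, and this conditional law does not depend on $(a,\mathbf{u})$. Hence $R\mathbf{u}$ is uniform and independent of $a$, and therefore so is $\mathbf{u}$.

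\textbf{Step 4: conclusion.} Since $\mathbf{e}_1 \perp \mathbf{k}_c$, taking the inner product of the decomposition with $\mathbf{e}_1$ gives
\begin{equation*}
\langle\bar{\mathbf{k}}_c,\mathbf{e}_1\rangle = \sqrt{1-a^2}\,\langle \mathbf{u},\mathbf{e}_1\rangle .
\end{equation*}
Set $X_1 := \langle\mathbf{u},\mathbf{e}_1\rangle$. Choose an orthonormal basis of $\mathbf{k}_c^\perp$ whose first vector is $\mathbf{e}_1$. Then $X_1$ is one coordinate of a uniform random vector on the unit sphere in $\mathbb{R}^{D-1}$, so by Lemma~\ref{circledistribute} with $D$ replaced by $D-1$ its density is $p_{D-1}$. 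By Step 3, $X_1$ is independent of $a$. Together these give exactly the claimed joint distribution of $(\langle\bar{\mathbf{k}}_c,\mathbf{k}_c\rangle, \langle\bar{\mathbf{k}}_c,\mathbf{e}_1\rangle)$.
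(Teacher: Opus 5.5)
Your proof is correct and follows the same route as the paper. The shared steps are: the law of $\bar{\mathbf{k}}_c$ is invariant under rotations fixing $\mathbf{k}_c$ (by left-invariance of the Haar measure); $\bar{\mathbf{k}}_c$ decomposes as $a\mathbf{k}_c + \sqrt{1-a^2}\,\mathbf{u}$; $\mathbf{u}$ is uniform and independent of $a$; and $X_1 = \langle \mathbf{u}, \mathbf{e}_1\rangle \sim p_{D-1}$. Your Step 3, which averages over an independent Haar rotation of $\mathbf{k}_c^\perp$, is a rigorous version of the paper's informal claim that the symmetry ``also holds for the conditional distribution,'' and your handling of the null event $|a|=1$ covers a case the paper leaves implicit.
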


\begin{proof}
    
Let $c_{\text{par}} = \langle\bar{\mathbf{k}}_c, \mathbf{k}_c\rangle$. This is the scalar length of the parallel component (projection) of $\bar{\mathbf{k}}_c$ onto the axis of $\mathbf{k}_c$.
The vector $\bar{\mathbf{k}}_c$ can be expressed as:
\begin{equation}
    \bar{\mathbf{k}}_c = c_{\text{par}} \mathbf{k}_c + \bar{\mathbf{k}}_{c, \perp}
\end{equation}
where $\bar{\mathbf{k}}_{c, \perp}$ is the component of $\bar{\mathbf{k}}_c$ in the orthogonal complement of $\mathbf{k}_c$, denoted $\mathbb{K}_c^\perp$. The norm of the perpendicular component is $\|\bar{\mathbf{k}}_{c, \perp}\| = \sqrt{1 - c_{\text{par}}^2}$.
We can define a unit vector $\mathbf{v} \in \mathbb{K}_c^\perp$:
\begin{equation}
    \mathbf{v} = \frac{\bar{\mathbf{k}}_{c, \perp}}{\|\bar{\mathbf{k}}_{c, \perp}\|} = \frac{\bar{\mathbf{k}}_c - c_{\text{par}} \mathbf{k}_c}{\sqrt{1 - c_{\text{par}}^2}}
\end{equation}
Hence, $\bar{\mathbf{k}}_c$ can be precisely expressed as:
\begin{equation}
    \bar{\mathbf{k}}_c = c_{\text{par}} \mathbf{k}_c + \sqrt{1 - c_{\text{par}}^2} \cdot \mathbf{v}
\end{equation}

Using the above decomposition, we compute the two inner products in the lemma:
\begin{align}
   \langle\bar{\mathbf{k}}_c, \mathbf{k}_c\rangle &= c_{\text{par}} \\
     \langle\bar{\mathbf{k}}_c, \mathbf{e}_1\rangle &= c_{\text{par}} \langle \mathbf{k}_c, \mathbf{e}_1 \rangle + \sqrt{1 - c_{\text{par}}^2} \langle \mathbf{v}, \mathbf{e}_1 \rangle \\
     &= \sqrt{1 - c_{\text{par}}^2} \cdot \langle \mathbf{v}, \mathbf{e}_1 \rangle \label{kce1perp}
\end{align}
Equations~\eqref{kce1perp} is due to the fact that $\mathbf{k}_c\perp \mathbf{e}_1$.The proof reduces to showing that, conditional on the value of $c_{\text{par}}$, the vector $\mathbf{v}$ is uniformly distributed on the unit sphere of the subspace $\mathbf{k}_c^\perp$.

Consider an arbitrary orthogonal transformation $R$ that guarantees $R\mathbf{k}_c = \mathbf{k}_c$. Let $P' = RP$. Due to the left-invariance of the Haar measure~\cite{haar1933massbegriff}, $P'$ is identically distributed to $P$.
We use $P'$ to re-conduct the selection process of $\bar{\mathbf{c}}$:

\begin{align}
\bar{\mathbf{c}}' &= {\arg\max}_{\mathbf{c} \in \mathcal{C}}\langle (P')^T\mathbf{k}_c, \mathbf{c} \rangle \\ &={\arg\max}_{\mathbf{c} \in \mathcal{C}} \langle P^T R^T \mathbf{k}_c, \mathbf{c} \rangle\\ &={\arg\max}_{\mathbf{c} \in \mathcal{C}} \langle P^T \mathbf{k}_c, \mathbf{c} \rangle\end{align}

 Therefore, the selected codeword is the same, $\bar{\mathbf{c}}' = \bar{\mathbf{c}}$.
The resulting vector is $\bar{\mathbf{k}}_c' = P'\bar{\mathbf{c}}' = (RP)\bar{\mathbf{c}} = R(P\bar{\mathbf{c}}) = R\bar{\mathbf{k}}_c$.
Since $P$ and $P'$ have the same distribution, $\bar{\mathbf{k}}_c$ and $\bar{\mathbf{k}}_c'$ must also be identically distributed. This implies that the distribution of $\bar{\mathbf{k}}_c$ is rotationally symmetric about the $\mathbf{k}_c$ axis. This symmetry also holds for the conditional distribution.
Conditioning on $\langle\bar{\mathbf{k}}_c, \mathbf{k}_c\rangle = c_{\text{par}}$, all the randomness in $\bar{\mathbf{k}}_c$ is captured by the unit vector $\mathbf{v}$. Therefore, the conditional distribution of $\mathbf{v}$ must be the uniform distribution on the unit sphere $S^{D-2}$ in the subspace $\mathbb{K}_c^\perp$.

 Since the distribution of $\mathbf{v}$ is the same uniform for any given value of $c_{\text{par}}$, $\mathbf{v}$ and $c_{\text{par}}$ are statistically independent. It follows that $X_1 = \langle \mathbf{v}, \mathbf{e}_1 \rangle$ is also independent of $c_{\text{par}} = \langle\bar{\mathbf{k}}_c, \mathbf{k}_c\rangle$.
     
$\mathbf{v}$ is a uniformly random unit vector in the $(D-1)$-dimensional subspace $\mathbf{k}_c^\perp$, and $\mathbf{e}_1$ is a fixed unit vector in that same subspace. By definition~\cite{papaspiliopoulos2020high}, the distribution of their inner product $X_1 = \langle \mathbf{v}, \mathbf{e}_1 \rangle$ is $p_{D-1}$.
\end{proof}

\begin{proof}
We first prove Equation~\eqref{EandP1}.
From Equation~\eqref{kbar_cdotq_c3} we can deduce that:
\begin{align}\mathbb{E}\left[\frac{\langle \bar{\mathbf{k}}_c, \mathbf{q}_c \rangle}{\langle \bar{\mathbf{k}}_c, \mathbf{k}_c \rangle} \right] &= \mathbb{E}\left[\langle \mathbf{k}_c, \mathbf{q}_c \rangle\right] +   \mathbb{E}\left[\frac{ \langle\overline{\mathbf{k}}_c, \mathbf{e}_1\rangle}{\langle \bar{\mathbf{k}}_c, \mathbf{k}_c \rangle}\sqrt{1-\langle \mathbf{k}_c, \mathbf{q}_c\rangle^2}\right]\end{align}

From Lemma~\ref{samedistribute} we can obtain $\langle\bar{\mathbf{k}}_c, \mathbf{e}_1\rangle\sim\sqrt{1-\langle\bar{\mathbf{k}}_c, \mathbf{k}_c\rangle^2} \cdot X_1$. Now we substitute this into the expectation expression:
\begin{align}\label{EX_split}\mathbb{E}\left[\frac{\langle \bar{\mathbf{k}}_c, \mathbf{q}_c \rangle}{\langle \bar{\mathbf{k}}_c, \mathbf{k}_c \rangle} \right] &= \mathbb{E}\left[\langle \mathbf{k}_c, \mathbf{q}_c \rangle\right] +   \mathbb{E}\left[\frac{ \sqrt{1-\langle\bar{\mathbf{k}}_c, \mathbf{k}_c\rangle^2}}{\langle \bar{\mathbf{k}}_c, \mathbf{k}_c \rangle}\sqrt{1-\langle \mathbf{k}_c, \mathbf{q}_c\rangle^2}\cdot\mathbf{X}_1\right]\\ 
&=\mathbb{E}\left[\langle \mathbf{k}_c, \mathbf{q}_c \rangle\right] +   \mathbb{E}\left[\frac{ \sqrt{1-\langle\bar{\mathbf{k}}_c, \mathbf{k}_c\rangle^2}}{\langle \bar{\mathbf{k}}_c, \mathbf{k}_c \rangle}\sqrt{1-\langle \mathbf{k}_c, \mathbf{q}_c\rangle^2}\right]\mathbb{E}\left[\mathbf{X}_1\right]\label{EX_split_2}
\end{align}
Equation~\eqref{EX_split_2} holds because $X_1$ is independent of $\langle \bar{k}_c, k_c \rangle$ and $\langle k_c, q_c \rangle$ can be regarded as a constant for given $k_c, q_c$. Since $\mathbf{X}_1$ is a symmetrically distributed random variable, $\mathbb{E}\left[\mathbf{X}_1\right]=0$, so we can conclude that:
\begin{align}\mathbb{E}\left[\frac{\langle \bar{\mathbf{k}}_c, \mathbf{q}_c \rangle}{\langle \bar{\mathbf{k}}_c, \mathbf{k}_c \rangle} \right] &= \langle \mathbf{k}_c, \mathbf{q}_c \rangle \end{align}

Next, we prove Equation~\eqref{EandP1}. We can construct the following probability function.

\begin{align}
&P\left\{\left|\frac{\langle \bar{\mathbf{k}}_c, \mathbf{q}_c \rangle}{\langle \bar{\mathbf{k}}_c, \mathbf{k}_c \rangle} - \langle \mathbf{k}_c, \mathbf{q}_c \rangle\right|>\frac{ \sqrt{1-\langle\bar{\mathbf{k}}_c, \mathbf{k}_c\rangle^2}}{\langle \bar{\mathbf{k}}_c, \mathbf{k}_c \rangle}\cdot\frac{t}{\sqrt{D-1}}\right\}\label{error_bound_P1} \\ 
=&P\left\{\left|\frac{ \langle\overline{\mathbf{k}}_c, \mathbf{e}_1\rangle}{\langle \bar{\mathbf{k}}_c, \mathbf{k}_c \rangle}\sqrt{1-\langle \mathbf{k}_c, \mathbf{q}_c\rangle^2}\right|>\frac{ \sqrt{1-\langle\bar{\mathbf{k}}_c, \mathbf{k}_c\rangle^2}}{\langle \bar{\mathbf{k}}_c, \mathbf{k}_c \rangle}\cdot\frac{t}{\sqrt{D-1}}\right\} \label{error_bound_P2}\\
\le &P\left\{\left|\frac{ \langle\overline{\mathbf{k}}_c, \mathbf{e}_1\rangle}{\langle \bar{\mathbf{k}}_c, \mathbf{k}_c \rangle}\right|>\frac{ \sqrt{1-\langle\bar{\mathbf{k}}_c, \mathbf{k}_c\rangle^2}}{\langle \bar{\mathbf{k}}_c, \mathbf{k}_c \rangle}\cdot\frac{t}{\sqrt{D-1}}\right\}  \label{error_bound_P3}\\
= &P\left\{\frac{ \sqrt{1-\langle\bar{\mathbf{k}}_c, \mathbf{k}_c\rangle^2}}{\langle \bar{\mathbf{k}}_c, \mathbf{k}_c \rangle}\left|\mathbf{X}_1\right|>\frac{ \sqrt{1-\langle\bar{\mathbf{k}}_c, \mathbf{k}_c\rangle^2}}{\langle \bar{\mathbf{k}}_c, \mathbf{k}_c \rangle}\cdot\frac{t}{\sqrt{D-1}}\right\}  \label{error_bound_P4} \\
= &P\left\{\left|\mathbf{X}_1\right|>\frac{t}{\sqrt{D-1}}\right\} \le 2 \exp(-c_0 t^2) \label{error_bound_P5}
\end{align} 
where~\eqref{error_bound_P2} is due to Equation~\eqref{kbar_cdotq_c3}.~\eqref{error_bound_P3} is due to bounding $\sqrt{1-\langle \mathbf{k}_c, \mathbf{q}_c\rangle^2}$ by 1.~\eqref{error_bound_P4} is due to lemma~\ref{samedistribute}. ~\eqref{error_bound_P5} is due to Lemma~\ref{circledistribute}. Therefore, we can conclude that
$\left| \frac{\langle \bar{\mathbf{k}}_c, \mathbf{q}_c \rangle}{\langle \bar{\mathbf{k}}_c, \mathbf{k}_c \rangle} - \langle \mathbf{k}_c, \mathbf{q}_c \rangle \right| = O\left(\frac{1}{\sqrt{D}}\right) \text{with high probability.}$
\end{proof}

\subsubsection{PROOF OF THEOREM~\ref{QUANTIZE_TH}}\label{proof_a2}
We provide the proof for Theorem~\ref{QUANTIZE_TH}, which states that selecting $B_q = \Omega(\log \log D)$ suffices to guarantee that $| \langle\bar{\mathbf{c}}, \mathbf{q}'\rangle - \langle\bar{\mathbf{c}}, \bar{\mathbf{q}}\rangle | = O\left(\frac{1}{\sqrt{D}}\right)$ with high probability. The proof is structured into three main steps. First, we bound the quantization error in terms of the quantization step size $\Delta$. Second, we bound $\Delta$ itself. Finally, we combine these bounds to determine the required bit-width $B_q$.

\begin{lemma}\label{proof_of_delta}
If we quantize $\mathbf{q}'$ to a $B_q$-bit vector $\bar{\mathbf{q}}$ using randomized uniform scalar quantization, let $[q_l, q_r]$ be the value range of $\mathbf{q}'$ and $\Delta = (q_r - q_l) / (2^{B_q}-1)$. The quantization error $| \langle\bar{\mathbf{c}}, \mathbf{q}'\rangle - \langle\bar{\mathbf{c}}, \bar{\mathbf{q}}\rangle | = O\left(\Delta\right)$ with high probability.
\end{lemma}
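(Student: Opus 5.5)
The plan is to treat the error as a sum of independent, zero-mean, bounded per-coordinate terms and apply Hoeffding's inequality, conditional on $\mathbf{q}'$ and $\bar{\mathbf{c}}$.

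First, fix the randomized quantizer, as in RaBitQ. For each coordinate $i$, let $\mathbf{q}'[i]$ lie in the cell $[q_l + j\Delta,\, q_l + (j+1)\Delta]$. We set $\bar{\mathbf{q}}[i]$ to the upper endpoint with probability $(\mathbf{q}'[i]-q_l-j\Delta)/\Delta$ and to the lower endpoint otherwise, independently across coordinates. The error term $\epsilon_i := \bar{\mathbf{q}}[i]-\mathbf{q}'[i]$ then satisfies three properties:
\begin{itemize}
\item $\mathbb{E}[\epsilon_i]=0$;
\item $|\epsilon_i|\le \Delta$;
\item the $\epsilon_i$ are independent given $\mathbf{q}'$.
\end{itemize}
Note that $\bar{\mathbf{c}}$ is determined by $\mathbf{k}_c$ and $\mathbf{P}$ alone, so it is independent of the quantizer's randomness. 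We may therefore condition on $\mathbf{P}$, $\mathbf{q}'$ and $\bar{\mathbf{c}}$ throughout.

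Second, write
\[
\langle\bar{\mathbf{c}},\bar{\mathbf{q}}\rangle-\langle\bar{\mathbf{c}},\mathbf{q}'\rangle=\sum_{i=1}^D \bar{\mathbf{c}}[i]\,\epsilon_i .
\]
Since $\bar{\mathbf{c}}[i]=\pm 1/\sqrt{D}$, each summand is zero-mean and lies in an interval of length at most $2\Delta/\sqrt{D}$. Hoeffding's inequality gives
\[
\mathbb{P}\Bigl\{\Bigl|\sum_i \bar{\mathbf{c}}[i]\epsilon_i\Bigr|>t\Delta\Bigr\}\le 2\exp\!\Bigl(-\tfrac{2t^2\Delta^2}{D\cdot 4\Delta^2/D}\Bigr)=2\exp(-t^2/2).
\]
Hence the error is $O(\Delta)$ with high probability. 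The bound is uniform in the conditioning, so it holds unconditionally as well.

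The step I expect to be the main obstacle is reconciling this with the deterministic rounding $\text{Round}(\cdot)$ used in the main text. With deterministic rounding only the trivial bound $\sum_i|\bar{\mathbf{c}}[i]|\,|\epsilon_i|\le \sqrt{D}\,\Delta/2$ is available, which is too weak. The plan therefore explicitly adopts the randomized quantizer, as the lemma's hypothesis states. Deterministic rounding would be justified only heuristically, by treating the rounding errors of the rotated coordinates as approximately independent and uniform thanks to the random rotation $\mathbf{P}$, and I would not claim that rigorously.
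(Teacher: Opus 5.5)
Your proposal is correct and follows the paper's proof essentially step for step. The paper also abandons deterministic rounding in favor of randomized rounding to obtain $\mathbb{E}[\epsilon_i]=0$, $|\epsilon_i|\le\Delta$ and independence, then applies Hoeffding with $a_i=-\Delta/\sqrt{D}$, $b_i=\Delta/\sqrt{D}$ and substitutes $u=t/\Delta$ to get the same $2\exp(-u^2/2)$ tail; your explicit conditioning on $\mathbf{P}$ (fixing $\bar{\mathbf{c}}$, $\mathbf{q}'$ and $\Delta$) and your caveat that the deterministic $\text{Round}$ of the main text is not rigorously covered are sound clarifications that the paper leaves implicit.
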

\begin{proof}
We expand the quantization error term:
\begin{align}
| \langle\bar{\mathbf{c}}, \mathbf{q}'\rangle - \langle\bar{\mathbf{c}}, \bar{\mathbf{q}}\rangle | = \left| \langle\bar{\mathbf{c}}, \mathbf{q}' - \bar{\mathbf{q}}\rangle \right| = \left| \sum_{i=1}^{D}{\bar{\mathbf{c}}[i]\cdot\left(\mathbf{q}'[i]-\bar{\mathbf{q}}[i]\right)}\right|
\end{align}

Let $\epsilon_i = \mathbf{q}'[i] - \bar{\mathbf{q}}[i]$ be the quantization error for the $i$-th component. By definition of rounding, we have $|\epsilon_i| \le \Delta$. However, to obtain a sharper bound, we consider randomized rounding. Because we are using uniform quantization, this makes the expected quantization error for a single component zero: $\mathbb{E}[\epsilon_i] = 0$.

The term inside the summation is $X_i = \bar{\mathbf{c}}[i]\cdot(\mathbf{q}'[i]-\bar{\mathbf{q}}[i])$. Recall that $\bar{\mathbf{c}}[i] = \pm 1/\sqrt{D}$. The quantization error $\epsilon_i$ is bounded by $[-\Delta, \Delta]$. Thus, each random variable $X_i$ is bounded by $[-\Delta/\sqrt{D}, \Delta/\sqrt{D}]$. Crucially, the randomization in the quantization of each component is independent. Therefore, $X_1, \dots, X_D$ are independent random variables with $\mathbb{E}[X_i] = 0$.

we apply Hoeffding's inequality~\cite{papaspiliopoulos2020high}, which states that for a sum of $n$ independent bounded variables $S_n = \sum X_i$, $\mathbb{P}(|S_n - \mathbb{E}[S_n]| \ge t) \le 2 \exp(-2t^2 / \sum (b_i-a_i)^2)$.

In our case, we note that $a_i = -\Delta/\sqrt{D},b_i=+\Delta/\sqrt{D},\mathbb{E}[S_n]=0$. Therefore, we can obtain the following conclusion.
\begin{align}
\mathbb{P}\left(\left|\sum_{i=1}^{D}{\bar{\mathbf{c}}[i]\cdot\left(\mathbf{q}'[i]-\bar{\mathbf{q}}[i]\right)}\right| \ge t\right) &\le 2 \exp\left(-\frac{t^2}{2\Delta^2}\right) \\
\mathbb{P}\left(\left|\sum_{i=1}^{D}{\bar{\mathbf{c}}[i]\cdot\left(\mathbf{q}'[i]-\bar{\mathbf{q}}[i]\right)}\right| \ge \Delta u\right) &\le 2 \exp\left(-\frac{u^2}{2}\right) \label{Hoeffding2}
\end{align}

where~\ref{Hoeffding2} is due to the setting $u = t/\Delta$. This shows that the error $| \langle\bar{\mathbf{c}}, \mathbf{q}'\rangle - \langle\bar{\mathbf{c}}, \bar{\mathbf{q}}\rangle |$ is bounded by $O(\Delta)$ with high probability.
\end{proof}

\begin{lemma}\label{proof_of_Delta_range}
The quantization step size $\Delta = (q_r - q_l) / (2^{B_q}-1) = O\left(\sqrt{\frac{\log D}{D}}/2^{B_q}\right)$ with high probability.
\end{lemma}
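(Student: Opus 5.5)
We bound the dynamic range of $\mathbf{q}'$ directly. The plan rests on three observations.

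\textbf{Step 1: distribution of $\mathbf{q}'$.} Since $\mathbf{q}_c$ is a fixed unit vector and $\mathbf{P}$ is Haar-distributed, $\mathbf{q}' = \mathbf{P}^T\mathbf{q}_c$ is uniformly distributed on the unit sphere $S^{D-1}$. This follows because $\mathbf{P}^T$ is also Haar-distributed, and the orbit of any fixed unit vector under a Haar orthogonal matrix is uniform on the sphere. Hence every coordinate $\mathbf{q}'[i]$ has density $p_D$ and satisfies the tail bound of Lemma~\ref{circledistribute}:
\begin{equation*}
\mathbb{P}\left\{|\mathbf{q}'[i]| > t/\sqrt{D}\right\} \le 2\exp(-c_0t^2).
\end{equation*}

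\textbf{Step 2: bound the range.} We have $q_r - q_l \le 2\max_i |\mathbf{q}'[i]| = 2\|\mathbf{q}'\|_\infty$. A union bound over the $D$ coordinates gives
\begin{equation*}
\mathbb{P}\left\{\|\mathbf{q}'\|_\infty > t/\sqrt{D}\right\} \le 2D\exp(-c_0t^2).
\end{equation*}
Choose $t = \sqrt{(\log D + s)/c_0}$ for a confidence parameter $s>0$. The failure probability then becomes $2e^{-s}$, which is small. On the complementary event,
\begin{equation*}
q_r - q_l \le 2\sqrt{\frac{\log D + s}{c_0 D}} = O\left(\sqrt{\frac{\log D}{D}}\right)
\end{equation*}
for fixed $s$, or for $s = O(\log D)$ if we want polynomially small failure probability.

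\textbf{Step 3: conclude.} Divide by $2^{B_q}-1 \ge 2^{B_q-1}$ for $B_q\ge 1$. This yields
\begin{equation*}
\Delta = O\left(\sqrt{\log D / D}\,/\,2^{B_q}\right)
\end{equation*}
with high probability, which is the claim.

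The main point needing care is Step 1, namely justifying that $\mathbf{q}'$ is exactly uniform on the sphere when $\mathbf{q}_c$ is fixed (deterministic given the query) and $\mathbf{P}$ is Haar. This uses the same left/right invariance of the Haar measure already exploited in Lemma~\ref{samedistribute}. With that in place, the coordinate tail bound and the union bound are routine. The only cost of passing to the maximum over coordinates is the $\sqrt{\log D}$ factor.

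Looking ahead, combining this with Lemma~\ref{proof_of_delta} bounds the quantization error by $O(\sqrt{\log D/D}/2^{B_q})$. Requiring this to be $O(1/\sqrt D)$ means $2^{B_q} = \Omega(\sqrt{\log D})$, i.e.\ $B_q=\Omega(\log\log D)$.
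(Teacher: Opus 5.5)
Your proposal is correct and follows essentially the same route as the paper: $\mathbf{q}'=\mathbf{P}^T\mathbf{q}_c$ is uniform on the sphere, you bound the range by $2\max_i|\mathbf{q}'[i]|$, apply the coordinate tail bound of Lemma~\ref{circledistribute}, and take a union bound with $t=\sqrt{(\log D+s)/c_0}$ to get failure probability $2e^{-s}$, then divide by $2^{B_q}-1$. Your three additions are small refinements of steps the paper only asserts: the Haar-invariance justification in Step 1, the inequality $2^{B_q}-1\ge 2^{B_q-1}$ in Step 3, and the remark that $s=O(\log D)$ gives polynomially small failure probability.
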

\begin{proof}
The step size $\Delta$ depends on the value range $[q_l, q_r]$ of the vector $\mathbf{q}' = P^T\mathbf{q}^c$. Since $\mathbf{q}^c$ is a fixed unit vector and $P$ is a random orthogonal matrix, $\mathbf{q}'$ is a uniformly random vector on the unit sphere $S^{D-1}$. The range is given by $q_r - q_l = \max_i \mathbf{q}'[i] - \min_i \mathbf{q}'[i] \le 2 \max_i |\mathbf{q}'[i]|$. We need to bound the maximum absolute value of any component of a random unit vector.

From Lemma~\ref{circledistribute}, for a random unit vector, the tail bound for any component $\mathbf{q}'[i]$ is:
\begin{align}
\mathbb{P}\left(|\mathbf{q}'[i]| > \frac{t}{\sqrt{D}}\right) \le 2 \exp(-c_0 t^2)
\end{align}
We can use the union bound to bound the maximum of all $D$ components:
\begin{align}
&\mathbb{P}\left(\max_{i} |\mathbf{q}'[i]| > \frac{t}{\sqrt{D}}\right) \\
&= \mathbb{P}\left(\exists 1\le i\ge D, |\mathbf{q}'[i]| > \frac{t}{\sqrt{D}}\right) \\
&\le \sum_{i=1}^D \mathbb{P}\left(|\mathbf{q}'[i]| > \frac{t}{\sqrt{D}}\right) \\
&\le D \cdot 2 \exp(-c_0 t^2)
\end{align}
To make this probability small, we need $D \cdot 2 \exp(-c_0 t^2)$ to be small. Let's set $t = \sqrt{(\log D + u)/c_0}$ for some constant $u>0$.
\begin{align}
\mathbb{P}\left(\max_{i} |\mathbf{q}'[i]| > \sqrt{\frac{\log D + u}{c_0 D}}\right) &\le 2D \exp(-c_0 \frac{\log D + u}{c_0}) \\
&= 2D \exp(-\log D - u) \\
&= 2D \cdot D^{-1} \cdot \exp(-u) = 2\exp(-u)
\end{align}
This implies that with high probability, $\max_i |\mathbf{q}'[i]| = O\left(\sqrt{\frac{\log D}{D}}\right)$.
Therefore, the range $q_r - q_l = O\left(\sqrt{\frac{\log D}{D}}\right)$, and the step size is:
\begin{align}
\Delta = \frac{q_r - q_l}{2^{B_q}-1} = O\left(\sqrt{\frac{\log D}{D}}/2^{B_q}\right) \quad \text{with high probability}
\end{align}
\end{proof}

\noindent\textbf{Proof of Theorem~\ref{QUANTIZE_TH}.}
By combining Lemma~\ref{proof_of_delta} and Lemma~\ref{proof_of_Delta_range}, the total quantization error is:
\begin{align}
| \langle\bar{\mathbf{c}}, \mathbf{q}'\rangle - \langle\bar{\mathbf{c}}, \bar{\mathbf{q}}\rangle | = O(\Delta) =  O\left(\sqrt{\frac{\log D}{D}}/2^{B_q}\right) \quad \text{with high probability}
\end{align}
Our goal is to make this quantization error asymptotically smaller than or equal to the main estimation error of the framework, which is $O(1/\sqrt{D})$.
\begin{align}
 O\left(\sqrt{\frac{\log D}{D}}/2^{B_q}\right) &\le O\left(\frac{1}{\sqrt{D}}\right)\label{final_proof_1} \\
\frac{\sqrt{\log D}}{2^{B_q}} &\le O(1) \\
2^{B_q} &\ge \Omega(\sqrt{\log D}) \\
B_q &\ge \Omega(\log(\sqrt{\log D})) = \Omega(\log\log D)\label{final_proof_4}
\end{align}
where~\eqref{final_proof_1}-\eqref{final_proof_4} are obtained by simple algebraic manipulations. Thus, selecting $B_q = \Omega(\log \log D)$ is sufficient to ensure the quantization error $| \langle\bar{\mathbf{c}}, \mathbf{q}'\rangle - \langle\bar{\mathbf{c}}, \bar{\mathbf{q}}\rangle |$ is at most $O(1/\sqrt{D})$ with high probability. This completes the proof. In practice, as $\log\log D$ grows very slowly, a small constant like $B_q=4$ is empirically sufficient for typical dimensions.
\hfill\qedsymbol

\section{Kernel Optimization Details}
\subsection{Top-$p$ Operator Kernel Optimization Details}
\label{app:topp_kernel_opt}

In this section, we elaborate on the low-level hardware optimizations implemented in our Top-$p$ operator to minimize latency and maximize memory bandwidth utilization on GPUs.

\textbf{Vectorized Memory Access \& Storage Reuse.}
Since the Top-$p$ operation is memory-bound, effective bandwidth utilization is critical. We employ vectorized load instructions aligned with the GPU memory bus width, coupled with a dynamic dispatch mechanism that selects the optimal vector size at runtime. Furthermore, we implement a union-based storage strategy in shared memory. This allows the same memory region to be strictly reused for multiple purposes—serving as reduction buffers during computation and as broadcasting channels for parameter updates—thereby significantly improving occupancy and reducing the shared memory footprint.

\textbf{Hierarchical Reduction \& Scheduling.}
To minimize synchronization overhead, we adopt a two-stage hierarchical reduction strategy. Threads first accumulate partial results in local registers before performing warp-level tree-based reductions. This approach substantially reduces bank conflicts and traffic in shared memory compared to standard atomic operations. Additionally, the kernel utilizes architecture-aware configurations to dynamically select the optimal thread block size, ensuring high utilization across different generations of GPU architectures.

\textbf{Instruction-Level Parallelism (ILP).}
At the instruction level, we heavily utilize loop unrolling strategies. By explicitly unrolling loops, we reduce loop control overhead (branching and index arithmetic) and expose more independent instructions to the compiler. This allows for better instruction scheduling and maximizes the throughput of the computational pipeline.

\section{Complexity Analysis}
\label{sec:appendix_complexity}

In this section, we provide a rigorous theoretical analysis of the time and space complexity of RaBitQCache compared to the standard Full Attention mechanism. We analyze the complexity with respect to the sequence length $L$, the number of attention heads $N_h$, the head dimension $D$, and the selected subset size $K_{selet}$ .

\subsection{Space Complexity Analysis}

We analyze the storage overhead introduced by RaBitQCache compared to a standard KVCache implementation. Let $L$ be the sequence length, $N_h$ be the number of attention heads, $D$ be the head dimension, and $B_{data}$ be the number of bytes per element in the original KVCache (e.g., $B_{data}=2$ for FP16).
\textbf{Standard KVCache:}
The standard KVCache stores both Key and Value matrices. The total memory consumption is:
\begin{equation}
\mathcal{S}_{KV} = 2 \cdot L \cdot N_h \cdot D \cdot B_{data} \text{ bytes}
\end{equation}
\textbf{RaBitQCache Index Overhead:}
RaBitQCache constructs a lightweight retrieval index consisting of two components:
\begin{enumerate}
    \item \textbf{Binary Index ($\bar{C}_b$):} Stores a 1-bit quantized representation for each dimension of the Key vector. The size is $L \cdot N_h \cdot D$ bits, or equivalently $L \cdot N_h \cdot \frac{D}{8}$ bytes.
    \item \textbf{Correction Factor ($\alpha$):} Stores one scalar per token per head to ensure unbiased estimation. Let $B_{\alpha}$ be the bytes required for this scalar (typically stored in FP16, so $B_{\alpha}=2$). The size is $L \cdot N_h \cdot B_{\alpha}$ bytes.
\end{enumerate}
The extra space complexity introduced by RaBitQCache is:
\begin{equation}
\mathcal{S}_{Index} = L \cdot N_h \cdot \left( \frac{D}{8} + B_{\alpha} \right) \text{ bytes}
\end{equation}
\textbf{Overhead Ratio:}
To quantify the cost, we define the overhead ratio $\eta$ as the size of the index relative to the full KVCache:
\begin{equation}
\eta = \frac{\mathcal{S}_{Index}}{\mathcal{S}_{KV}} = \frac{L \cdot N_h \cdot (D/8 + B_{\alpha})}{2 \cdot L \cdot N_h \cdot D \cdot B_{data}} = \frac{D/8 + B_{\alpha}}{2 \cdot D \cdot B_{data}}
\end{equation}
\textbf{Case Study (FP16):}
Assuming a typical configuration where the KVCache uses FP16 precision ($B_{data}=2$), the correction factor is FP16 ($B_{\alpha}=2$), and the head dimension $D=128$:
\begin{equation}
\eta_{FP16} = \frac{128/8 + 2}{2 \cdot 128 \cdot 2} = \frac{16 + 2}{512} = \frac{18}{512} \approx 3.5\%
\end{equation}
\textbf{Analysis:}
The RaBitQCache index introduces a negligible memory overhead (approx. 3.5\% for FP16). This extreme compactness allows the index to reside permanently in high-bandwidth GPU memory with minimal impact on capacity. Crucially, this small footprint decouples the retrieval logic from the bulk data storage, making it feasible to offload the massive full-precision $\mathcal{S}_{KV}$ to cheaper host memory or storage while retaining high-performance retrieval capabilities on the GPU.

\subsection{Time Complexity}

We analyze the computational complexity for the Prefill and Decoding phases separately.

\textbf{Prefill Phase.} In the standard attention mechanism, the model computes attention scores for all token pairs, leading to a quadratic complexity of $\mathcal{O}(N_h \cdot L^2 \cdot D)$. For RaBitQCache, the index construction involves three steps for each token: (1) random rotation ($\mathcal{O}(N_h\cdot L\cdot D^2)$), (2) binary quantization to generate the index $\bar{C}_b$ ($\mathcal{O}(N_h\cdot L\cdot D)$), and (3) computing the scalar correction factor $\alpha$ ($\mathcal{O}(D)$). Therefore, the total indexing complexity is $\mathcal{O}(N_h \cdot L \cdot D^2)$. 

Since the head dimension $D$ is constant and $D \ll L$ in long-context scenarios, this linear overhead is asymptotically negligible compared to the quadratic cost of the standard attention computation. Furthermore, in our system implementation, these element-wise and matrix-vector operations are pipelined and effectively masked by the computationally intensive attention calculation and memory movement of the prefill phase.

\textbf{Decoding Phase (Per Token).} During the decoding step $t$, the system generates one token. In the standard attention mechanism, the model computes the inner product of the new query with all cached keys, resulting in a complexity of $\mathcal{O}(N_h \cdot L \cdot D)$ using floating-point operations. However, the primary bottleneck is typically memory bandwidth, constrained by the latency of loading the full KV Cache. 

In contrast, RaBitQCache scans the binary index to estimate scores. Although the theoretical complexity remains linear at $\mathcal{O}(N_h \cdot L \cdot D)$, it utilizes the efficient INT4\_dot\_Binary GEMV operator, which significantly accelerates execution speed compared to high-precision GEMV. Subsequently, our optimized Top-$p$ kernel performs candidate selection in $\mathcal{O}(L)$. Finally, the precise attention computation is restricted to the selected subset $K_{sel}$ (where $K_{sel} \ll L$) and the local sliding window $W$, reducing the computational cost to $\mathcal{O}(N_h\cdot K_{sel}\cdot D)$.

\textbf{Summary:}
RaBitQCache maintains a linear scan complexity but significantly reduces the constant factor and, more importantly, the memory I/O volume. By filtering irrelevant tokens using the lightweight index, the expensive full-precision attention computation is reduced from $\mathcal{O}(L \cdot D)$ to $\mathcal{O}(K_{sel} \cdot D)$, ensuring low latency even as the context length $L$ scales to hundreds of thousands.

\section{Full Results on Longbench}

Please see details in Table~\ref{tab:full_long_results} and~\ref{tab:budget_recall_stats_full}. The recall rate is the attention score divided by the attention score of Full Attention.

\begin{table}[t]
  \caption{Full results on Longbench. The table spans the full width of the page to accommodate all datasets. \textit{MF-en} is abbreviated for MultiFieldQA-en, \textit{NarrQA} for NarrativeQA, \textit{Hotpot} for HotpotQA, \textit{2Wiki} for 2WikiMQA, \textit{Musiq} for Musique, \textit{GovRep} for GovReport, \textit{MultiN} for MultiNews, \textit{Trivia} for TriviaQA, \textit{PR-en} for PassageRetrieval-en, \textit{Repo} for Repobench-P. The best results for each model (excluding Full and Oracle) are marked in \textbf{bold}.}
  \label{tab:full_long_results}
  \begin{center}
    \scriptsize
    \begin{tabularx}{\textwidth}{@{} l c ZZZ ZZZ ZZZ Z Z ZZ Z @{}}
      \toprule
      \multirow{2}{*}{\textbf{Method}} & \multirow{2}{*}{\textbf{Budget}} & 
      \multicolumn{3}{c}{\textbf{Single-Doc QA}} & 
      \multicolumn{3}{c}{\textbf{Multi-Doc QA}} & 
      \multicolumn{3}{c}{\textbf{Summarization}} & 
      \textbf{Few} & \textbf{Syn} & \multicolumn{2}{c}{\textbf{Code}} & 
      \multirow{2}{*}{\textbf{Avg}} \\
      \cmidrule(lr){3-5} \cmidrule(lr){6-8} \cmidrule(lr){9-11} \cmidrule(lr){12-12} \cmidrule(lr){13-13} \cmidrule(lr){14-15}
       & & Qasper & MF-en & NarrQA & Hotpot & 2Wiki & Musiq & GovRep & QMSum & MultiN & Trivia & PR-en & LCC & Repo & \\
      \midrule
      \multicolumn{16}{c}{\textit{\textbf{Longchat-7B-v1.5-32k}}} \\
      \midrule
      Full & - & 28.66 & 41.51 & 21.35 & 31.18 & 23.40 & 13.11 & 31.07 & 22.77 & 26.25 & 83.55 & 30.00 & 54.35 & 57.75 & 35.78 \\
      \midrule
      Oracle & 1024 & 30.37 & 41.18 & 19.94 & 31.8 & 22.13 & 12.78 & 30.84 & 23.14 & 26.41 & 84.35 & 29 & 55.71 & 58.34 & 35.84 \\
      \midrule
       RaBitQCache& $p$=0.95 & 30.17 & 42.78 & \textbf{21.07} & 32.36 & 23.07 & \textbf{14.50} & \textbf{31.60} & \textbf{23.29} & \textbf{26.68} & 84.45 & 28.00 & 54.86 & 57.9 & 36.21 \\
       \midrule
      Quest & 256  & 25.66 & 32.39 & 13.49 & 21.72 & \textbf{24.74} & 8.67 & 22.64 & 20.82 & 24.93 & 70.29 & 32.92 & 50.68 & 47.50 & 30.50 \\
            & 1024 & \textbf{30.41} & 41.93 & 17.99 & 30.34 & 23.89 & 9.69 & 29.93 & 22.41 & 26.30 & 83.91 & \textbf{34.00} & 52.47 & 55.69 & 35.30 \\
            & 4096 & 29.67 & \textbf{43.04} & 20.05 & 33.04 & 23.56 & 13.56 & 31.40 & 22.95 & 26.64 & \textbf{84.81} & 30.50 & 54.40 & 57.98 & \textbf{36.28} \\
            & 8192 & 28.73 & 42.16 & 20.87 & \textbf{33.14} & 23.55 & 13.21 & 31.14 & 23.11 & 26.29 & 83.96 & 31.00 & 54.69 & 57.63 & 36.11 \\
      \midrule
      DS    & 256  & 3.28  & 3.32  & 0.64  & 0.21  & 1.02  & 0.18  & 1.46  & 2.06  & 6.68  & 1.95  & 1.33  & 17.20 & 9.38  & 3.75 \\
            & 1024 & 11.86 & 15.04 & 0.72  & 1.52  & 5.61  & 0.27  & 4.43  & 2.65  & 21.14 & 11.10 & 0.82  & 34.73 & 13.88 & 9.52 \\
            & 4096 & 28.20 & 37.08 & 3.21  & 9.33  & 20.32 & 1.15  & 18.80 & 12.06 & 26.08 & 33.55 & 1.92  & 52.16 & 32.77 & 21.28 \\
            & 8192 & 28.56 & 41.31 & 10.65 & 28.58 & 23.31 & 8.81  & 27.65 & 21.08 & 26.34 & 70.81 & 32.00 & 54.19 & 50.29 & 32.58 \\
      \midrule
      SparQ & Ratio=0.25 & 29.24 & 40.90 & 20.60 & 31.25 & 22.65 & 13.07 & 30.73 & 22.90 & 26.42 & 84.30 & 30.50 & \textbf{55.90} & \textbf{58.39} & 35.91 \\
      \midrule
      \multicolumn{16}{c}{\textit{\textbf{LLaMA-3.1-8B-Instruct}}} \\
      \midrule
      Full & - & 44.78 & 53.39 & 29.09 & 55.95 & 43.45 & 27.92 & 34.78 & 25.18 & 27.58 & 91.17 & 99.50 & 64.76 & 59.97 & 50.58 \\
      \midrule
        Oracle& 1024 & 45.39 & 53.10 & 29.58 & 56.16 & 38.57 & 25.47 & 33.73 & 25.14 & 27.47 & 91.97 & 99.50 & 65.53 & 62.37 & 50.31 \\
        \midrule
       RabitQCache& - & 46.29 & 52.97 & 29.65 & 56.16 & 41.07 & \textbf{27.87} & \textbf{34.96} & 25.25 & 27.24 & 91.67 & \textbf{99.50} & 64.74 & 60.83 & \textbf{50.63} \\
      \midrule
      Quest & 256  & 25.12 & 37.77 & 10.27 & 33.13 & 26.67 & 10.54 & 26.16 & 19.48 & 27.27 & 63.81 & 93.00 & 55.59 & 48.48 & 36.71 \\
            & 1024 & 39.05 & 50.52 & 25.94 & 46.26 & 39.34 & 20.78 & 33.24 & 23.67 & 27.50 & 79.46 & \textbf{99.50} & 60.76 & 58.75 & 46.52 \\
            & 4096 & 46.83 & 51.31 & 28.43 & 51.98 & 36.72 & 24.37 & 34.77 & 25.14 & 27.46 & 89.82 & \textbf{99.50} & 64.22 & 59.62 & 49.24 \\
            & 8192 & \textbf{46.83} & 52.90 & 28.55 & 54.82 & \textbf{44.43} & 25.27 & 34.85 & \textbf{25.51} & 27.35 & \textbf{91.73} & \textbf{99.50} & 64.05 & 59.86 & 50.43 \\
      \midrule
      DS    & 256  & 44.78 & 51.79 & 28.36 & 46.95 & 39.76 & 26.04 & 34.13 & 24.80 & 27.32 & 89.83 & \textbf{99.50} & 63.27 & 61.21 & 49.06 \\
            & 1024 & 45.71 & 52.58 & 29.51 & 54.61 & 40.68 & 25.37 & 34.15 & 24.83 & 27.32 & 91.58 & \textbf{99.50} & 66.10 & \textbf{61.72} & 50.28 \\
            & 4096 & 44.81 & \textbf{53.40} & 29.38 & 54.81 & 43.06 & 25.43 & 34.38 & 25.25 & 27.49 & 91.48 & \textbf{99.50} & 64.81 & 61.46 & 50.40 \\
            & 8192 & 45.21 & 53.06 & 29.44 & 55.27 & 43.26 & 26.43 & 34.76 & 25.06 & \textbf{27.57} & 91.64 & \textbf{99.50} & 64.89 & 60.04 & 50.47 \\
      \midrule
      SparQ & Ratio=0.25 & 44.10 & 53.30 & \textbf{29.87} & \textbf{56.22} & 38.46 & 25.23 & 34.05 & 24.99 & 27.34 & 91.48 & \textbf{99.50} & \textbf{66.20} & 61.27 & 50.15 \\
      \bottomrule
    \end{tabularx}
  \end{center}
  \vskip -0.1in
\end{table}

\begin{table}[t]
  \caption{Statistics of Token \textbf{Budget} (Bud.) and Retrieval \textbf{Recall} of attention score (Rec.) in percentage (\%) on LongBench. }
  \label{tab:budget_recall_stats_full}
  \begin{center}
    \tiny
    \renewcommand{\arraystretch}{1.05}
    \begin{tabularx}{\textwidth}{@{} l l ZZZ ZZZ ZZZ Z Z ZZ Z @{}}
      \toprule
      \multirow{2}{*}{\textbf{Method}} & \multirow{2}{*}{\textbf{Metric}} & 
      \multicolumn{3}{c}{\textbf{Single-Doc QA}} & 
      \multicolumn{3}{c}{\textbf{Multi-Doc QA}} & 
      \multicolumn{3}{c}{\textbf{Summarization}} & 
      \textbf{Few} & \textbf{Syn} & \multicolumn{2}{c}{\textbf{Code}} & 
      \multirow{2}{*}{\textbf{Avg}} \\
      \cmidrule(lr){3-5} \cmidrule(lr){6-8} \cmidrule(lr){9-11} \cmidrule(lr){12-12} \cmidrule(lr){13-13} \cmidrule(lr){14-15}
       & & Qasper & MF-en & NarrQA & Hotpot & 2Wiki & Musiq & GovRep & QMSum & MultiN & Trivia & PR-en & LCC & Repo & \\
      \midrule
      \multicolumn{16}{c}{\textit{\textbf{Longchat-7B-v1.5-32k}}} \\
      \midrule
      \multirow{2}{*}{Full (None)} & Bud. & 100.00 & 100.00 & 100.00 & 100.00 & 100.00 & 100.00 & 100.00 & 100.00 & 100.00 & 100.00 & 100.00 & 100.00 & 100.00 & 100.00 \\ 
 & Rec. & 100.00 & 100.00 & 100.00 & 100.00 & 100.00 & 100.00 & 100.00 & 100.00 & 100.00 & 100.00 & 100.00 & 100.00 & 100.00 & 100.00 \\ 
\cmidrule{2-16}
\multirow{2}{*}{Oracle(1k)} & Bud. & 17.55 & 12.55 & 4.14 & 6.67 & 12.02 & 5.51 & 8.42 & 6.43 & 29.88 & 7.27 & 7.05 & 23.79 & 7.13 & 11.42 \\ 
 & Rec. & 91.02 & 89.54 & 81.45 & 84.82 & 89.05 & 83.82 & 88.29 & 86.64 & 95.75 & 89.62 & 88.52 & 93.93 & 88.52 & 89.30 \\ 
\cmidrule{2-16}
\multirow{2}{*}{RaBitQCache} & Bud. & 24.33 & 20.15 & 11.91 & 14.77 & 19.54 & 13.45 & 16.04 & 14.77 & 27.04 & 13.23 & 13.83 & 23.79 & 12.50 & 17.33 \\ 
 & Rec. & 91.38 & 91.04 & 86.02 & 89.15 & 90.73 & 88.09 & 89.19 & 89.15 & 91.53 & 89.56 & 90.93 & 92.00 & 89.24 & 89.85 \\ 
\cmidrule{2-16}
\multirow{2}{*}{SparQ} & Bud. & 25.00 & 25.00 & 25.00 & 25.00 & 25.00 & 25.00 & 25.00 & 25.00 & 24.99 & 25.00 & 25.00 & 25.01 & 25.00 & 25.00 \\ 
 & Rec. & 92.01 & 92.09 & 94.04 & 93.37 & 92.01 & 94.19 & 93.51 & 94.51 & 90.42 & 94.92 & 94.97 & 90.02 & 94.62 & 93.13 \\ 
\cmidrule{2-16}
\multirow{2}{*}{Quest (256)} & Bud. & 4.35 & 3.11 & 1.03 & 1.66 & 2.98 & 1.37 & 2.09 & 1.60 & 7.55 & 1.81 & 1.75 & 5.90 & 1.77 & 2.84 \\ 
 & Rec. & 28.35 & 29.48 & 19.38 & 22.50 & 29.08 & 20.91 & 24.71 & 21.42 & 40.92 & 29.12 & 27.79 & 34.27 & 26.71 & 27.28 \\ 
\multirow{2}{*}{Quest (1k)} & Bud. & 17.49 & 12.51 & 4.13 & 6.64 & 11.98 & 5.49 & 8.39 & 6.40 & 29.77 & 7.24 & 7.02 & 23.70 & 7.11 & 11.38 \\ 
 & Rec. & 61.31 & 60.25 & 43.40 & 49.58 & 60.03 & 46.71 & 55.71 & 48.97 & 77.72 & 62.54 & 59.77 & 71.96 & 58.62 & 58.12 \\ 
\multirow{2}{*}{Quest (4k)} & Bud. & 66.66 & 47.17 & 16.55 & 26.51 & 46.53 & 22.01 & 33.37 & 25.64 & 81.37 & 28.35 & 28.15 & 73.00 & 28.48 & 40.29 \\ 
 & Rec. & 93.57 & 89.47 & 71.54 & 79.74 & 89.23 & 76.23 & 85.71 & 79.25 & 98.44 & 87.74 & 83.75 & 97.50 & 86.41 & 86.05 \\ 
\multirow{2}{*}{Quest (8k)} & Bud. & 93.16 & 80.21 & 33.11 & 51.59 & 80.27 & 44.03 & 61.23 & 50.35 & 94.67 & 52.51 & 56.33 & 89.11 & 52.90 & 64.57 \\ 
 & Rec. & 99.45 & 97.72 & 85.01 & 91.76 & 97.92 & 89.03 & 95.27 & 91.59 & 99.82 & 95.25 & 93.68 & 99.58 & 95.10 & 94.71 \\ 
\cmidrule{2-16}
\multirow{2}{*}{DS (256)} & Bud. & 4.39 & 3.14 & 1.04 & 1.67 & 3.01 & 1.38 & 2.10 & 1.61 & 7.61 & 1.82 & 1.76 & 5.95 & 1.78 & 2.86 \\ 
 & Rec. & 77.66 & 46.76 & 1.99 & 0.05 & 0.22 & 0.01 & 0.10 & 0.02 & 2.63 & 0.16 & 0.02 & 0.85 & 0.07 & 10.04 \\ 
\multirow{2}{*}{DS (1k)} & Bud. & 17.55 & 12.55 & 4.14 & 6.67 & 12.02 & 5.51 & 8.42 & 6.43 & 29.88 & 7.27 & 7.05 & 23.79 & 7.13 & 11.42 \\ 
 & Rec. & 97.16 & 86.05 & 19.99 & 0.59 & 3.65 & 0.10 & 1.00 & 0.20 & 29.37 & 1.97 & 0.16 & 14.12 & 0.65 & 20.38 \\ 
\multirow{2}{*}{DS (4k)} & Bud. & 66.73 & 47.22 & 16.57 & 26.54 & 46.58 & 22.03 & 33.41 & 25.68 & 81.41 & 28.39 & 28.19 & 73.04 & 28.52 & 40.33 \\ 
 & Rec. & 100.00 & 100.00 & 95.81 & 10.61 & 48.11 & 1.30 & 27.09 & 9.83 & 93.02 & 23.02 & 2.20 & 87.32 & 22.98 & 47.79 \\ 
\multirow{2}{*}{DS (8k)} & Bud. & 93.17 & 80.25 & 33.14 & 51.62 & 80.29 & 44.06 & 61.26 & 50.38 & 94.67 & 52.54 & 56.37 & 89.12 & 52.92 & 64.60 \\ 
 & Rec. & 100.00 & 100.00 & 100.00 & 35.86 & 85.52 & 12.88 & 64.32 & 40.95 & 98.46 & 48.83 & 40.36 & 96.42 & 53.00 & 67.43 \\ 
      \midrule
      \multicolumn{16}{c}{\textit{\textbf{LLaMA-3.1-8B-Instruct}}} \\
      \midrule
      \multirow{2}{*}{Full (None)} & Bud. & 100.00 & 100.00 & 100.00 & 100.00 & 100.00 & 100.00 & 100.00 & 100.00 & 100.00 & 100.00 & 100.00 & 100.00 & 100.00 & 100.00 \\ 
 & Rec. & 100.00 & 100.00 & 100.00 & 100.00 & 100.00 & 100.00 & 100.00 & 100.00 & 100.00 & 100.00 & 100.00 & 100.00 & 100.00 & 100.00 \\ 
\cmidrule{2-16}
\multirow{2}{*}{Oracle(1k)} & Bud. & 20.07 & 14.73 & 4.30 & 7.96 & 14.27 & 6.55 & 9.95 & 7.33 & 34.93 & 8.69 & 8.18 & 32.00 & 9.49 & 13.73 \\ 
 & Rec. & 89.35 & 87.70 & 74.92 & 81.97 & 87.89 & 78.72 & 84.25 & 81.50 & 95.46 & 88.70 & 89.84 & 94.87 & 84.52 & 86.90 \\ 
\cmidrule{2-16}
\multirow{2}{*}{RaBitQCache} & Bud. & 26.62 & 23.60 & 15.79 & 18.08 & 21.79 & 17.21 & 22.15 & 20.17 & 32.67 & 18.37 & 15.52 & 28.62 & 18.33 & 21.45 \\ 
 & Rec. & 91.28 & 90.88 & 89.55 & 90.11 & 90.84 & 89.10 & 91.00 & 90.23 & 92.60 & 91.62 & 90.58 & 91.60 & 90.11 & 90.73 \\ 
\cmidrule{2-16}
\multirow{2}{*}{SparQ} & Bud. & 25.01 & 25.00 & 25.00 & 25.00 & 25.00 & 25.00 & 25.00 & 25.00 & 25.00 & 25.00 & 25.00 & 25.01 & 25.00 & 25.00 \\ 
 & Rec. & 86.93 & 88.41 & 90.45 & 89.91 & 89.16 & 89.84 & 88.85 & 90.60 & 85.67 & 93.09 & 94.47 & 85.55 & 89.59 & 89.81 \\ 
\cmidrule{2-16}
\multirow{2}{*}{Quest (256)} & Bud. & 4.99 & 3.66 & 1.07 & 1.98 & 3.54 & 1.63 & 2.48 & 1.82 & 8.90 & 2.16 & 2.03 & 7.96 & 2.36 & 3.43 \\ 
 & Rec. & 57.87 & 56.09 & 34.14 & 47.80 & 58.73 & 42.88 & 47.53 & 39.36 & 74.56 & 57.32 & 63.93 & 67.67 & 45.58 & 53.34 \\ 
\multirow{2}{*}{Quest (1k)} & Bud. & 20.01 & 14.69 & 4.29 & 7.94 & 14.23 & 6.54 & 9.93 & 7.31 & 34.82 & 8.66 & 8.15 & 31.90 & 9.47 & 13.69 \\ 
 & Rec. & 92.34 & 86.72 & 62.51 & 77.35 & 88.00 & 72.61 & 80.29 & 71.75 & 98.13 & 83.25 & 84.38 & 97.13 & 78.79 & 82.55 \\ 
\multirow{2}{*}{Quest (4k)} & Bud. & 73.32 & 53.60 & 17.17 & 31.57 & 54.63 & 26.19 & 39.07 & 29.23 & 85.90 & 33.39 & 32.66 & 80.52 & 37.33 & 45.74 \\ 
 & Rec. & 99.30 & 97.14 & 78.23 & 90.21 & 97.67 & 86.98 & 93.46 & 87.95 & 99.79 & 92.95 & 92.73 & 99.44 & 92.01 & 92.91 \\ 
\multirow{2}{*}{Quest (8k)} & Bud. & 94.75 & 87.44 & 34.33 & 60.65 & 85.83 & 52.32 & 68.77 & 56.33 & 96.15 & 60.88 & 65.36 & 92.59 & 64.54 & 70.76 \\ 
 & Rec. & 99.78 & 99.26 & 88.84 & 96.06 & 99.26 & 93.82 & 97.15 & 95.25 & 99.93 & 97.37 & 97.98 & 99.80 & 96.61 & 97.01 \\ 
\cmidrule{2-16}
\multirow{2}{*}{DS (256)} & Bud. & 5.02 & 3.68 & 1.07 & 1.99 & 3.57 & 1.64 & 2.49 & 1.83 & 8.95 & 2.17 & 2.04 & 8.00 & 2.37 & 3.45 \\ 
 & Rec. & 46.55 & 46.81 & 31.28 & 40.68 & 47.57 & 36.43 & 36.91 & 34.13 & 55.26 & 45.34 & 51.63 & 49.41 & 33.36 & 42.72 \\ 
\multirow{2}{*}{DS (1k)} & Bud. & 20.07 & 14.73 & 4.30 & 7.96 & 14.27 & 6.55 & 9.95 & 7.33 & 34.93 & 8.69 & 8.18 & 32.00 & 9.49 & 13.73 \\ 
 & Rec. & 74.63 & 72.47 & 52.89 & 63.82 & 72.92 & 58.63 & 63.08 & 59.97 & 86.19 & 69.72 & 73.93 & 81.77 & 59.41 & 68.42 \\ 
\multirow{2}{*}{DS (4k)} & Bud. & 73.38 & 53.65 & 17.19 & 31.61 & 54.68 & 26.22 & 39.10 & 29.26 & 85.90 & 33.42 & 32.70 & 80.54 & 37.36 & 45.77 \\ 
 & Rec. & 97.46 & 94.07 & 77.66 & 87.37 & 94.81 & 83.21 & 89.31 & 86.01 & 99.37 & 91.00 & 92.03 & 98.77 & 88.27 & 90.72 \\ 
\multirow{2}{*}{DS (8k)} & Bud. & 94.76 & 87.47 & 34.35 & 60.69 & 85.85 & 52.36 & 68.80 & 56.35 & 96.14 & 60.91 & 65.40 & 92.60 & 64.56 & 70.79 \\ 
 & Rec. & 99.26 & 97.14 & 88.84 & 96.06 & 99.26 & 93.82 & 97.15 & 95.25 & 99.93 & 97.37 & 97.98 & 99.80 & 96.61 & 96.88 \\ 
      \bottomrule
    \end{tabularx}
  \end{center}
  \vskip -0.1in
\end{table}

\section{Additional Accuracy Experiments}
\label{sec:additional_accuracy}

In this section, we report the detailed per-task results corresponding to the summarized tables in Section~\ref{sec:correctness}.

\subsection{Comparison with Additional Sparse Attention Baselines on LongBench}

Table~\ref{tab:full_more_baselines} reports the full per-task results of RaBitQCache against MagicPIG, PyramidKV, SnapKV, PQCache, and KIVI on LongBench (LLaMA-3.1-8B-Instruct). All baselines use their official default configurations, as reported in Section~\ref{sec:correctness}. RaBitQCache uses $p=0.95$.

\begin{table}[t]
\caption{Per-task LongBench scores on LLaMA-3.1-8B-Instruct against additional sparse attention baselines. RaBitQCache achieves the best average score. Method abbreviations: MPIG=MagicPIG, Pyr=PyramidKV, Snap=SnapKV, PQC=PQCache.}
\label{tab:full_more_baselines}
\begin{center}
\scriptsize
\begin{tabularx}{\textwidth}{@{} l *{6}{>{\centering\arraybackslash}X} @{}}
\toprule
Dataset & RaBitQ & MPIG & Pyr & Snap & PQC & KIVI \\
\midrule
2wikimqa             & 41.07 & 46.25 & 39.66 & 32.93 & 45.04 & 43.86 \\
gov\_report          & 34.96 & 33.56 & 22.78 & 26.81 & 33.94 & 34.93 \\
hotpotqa             & 56.16 & 54.28 & 52.87 & 49.32 & 55.52 & 54.37 \\
lcc                  & 64.74 & 60.91 & 56.69 & 61.98 & 62.70 & 61.93 \\
multi\_news          & 27.24 & 26.56 & 22.18 & 26.13 & 26.06 & 27.21 \\
multifieldqa\_en     & 52.97 & 54.94 & 48.75 & 42.14 & 53.27 & 54.47 \\
musique              & 27.87 & 27.16 & 23.90 & 21.88 & 30.02 & 30.92 \\
narrativeqa          & 29.65 & 29.01 & 28.88 & 20.61 & 30.16 & 29.63 \\
passage\_retrieval\_en & 99.50 & 99.50 & 99.50 & 98.50 & 99.50 & 99.00 \\
qasper               & 46.29 & 45.04 & 29.87 & 31.41 & 43.92 & 43.56 \\
qmsum                & 25.25 & 25.33 & 22.84 & 22.27 & 24.67 & 25.17 \\
repobench            & 60.83 & 55.16 & 49.49 & 59.71 & 57.60 & 54.91 \\
triviaqa             & 91.67 & 91.66 & 88.71 & 90.11 & 92.08 & 91.76 \\
\midrule
\textbf{Avg.}        & \textbf{50.63} & 49.95 & 45.09 & 44.91 & 50.34 & 50.13 \\
\bottomrule
\end{tabularx}
\end{center}
\vskip -0.1in
\end{table}

\subsection{LongBench Results on LLaMA-3.1-70B-Instruct}

Table~\ref{tab:llama70b_longbench} reports the per-task LongBench scores on LLaMA-3.1-70B-Instruct. RaBitQCache uses $p=0.9$, and Quest uses a 1024-token budget. We additionally report the realized average token budget and the per-task attention recall for each sparse baseline. The results clearly show that, in contrast to fixed-budget Top-$k$ baselines, RaBitQCache adaptively allocates the budget across tasks (e.g., 856.9 tokens on multi\_news vs.\ 3550.5 tokens on narrativeqa), which is the key advantage of Top-$p$ retrieval.

\begin{table}[t]
\caption{Per-task LongBench results on LLaMA-3.1-70B-Instruct. ``Bud./Score'' denotes the realized average token budget and the corresponding per-task attention recall for each sparse method.}
\label{tab:llama70b_longbench}
\begin{center}
\scriptsize
\begin{tabularx}{\textwidth}{@{} l *{4}{>{\centering\arraybackslash}X} *{3}{>{\centering\arraybackslash}X} @{}}
\toprule
\multirow{2}{*}{Dataset} & \multicolumn{4}{c}{Score} & \multicolumn{3}{c}{Budget / Attn.\ Recall} \\
\cmidrule(lr){2-5} \cmidrule(lr){6-8}
 & Full & RaBitQ & Quest & SparQ & Quest & RaBitQ & SparQ \\
\midrule
narrativeqa          & 35.03 & 34.66 & 32.26 & 34.30 & 1020.7/0.380 & 3550.5/0.855 & 5959.0/0.864 \\
qasper               & 50.96 & 49.89 & 48.09 & 50.15 & 1020.2/0.632 & 1308.5/0.883 & 1275.6/0.829 \\
multifieldqa\_en     & 54.45 & 54.97 & 51.16 & 54.37 & 1020.5/0.590 & 1595.9/0.873 & 1738.4/0.837 \\
hotpotqa             & 64.91 & 64.94 & 61.43 & 63.56 & 1020.6/0.497 & 2289.4/0.863 & 3215.3/0.837 \\
2wikimqa             & 68.17 & 67.70 & 55.77 & 65.46 & 1020.0/0.606 & 1557.5/0.874 & 1793.7/0.841 \\
musique              & 44.61 & 44.16 & 40.55 & 37.88 & 1020.6/0.446 & 2690.8/0.850 & 3906.0/0.827 \\
gov\_report          & 35.03 & 35.21 & 33.67 & 35.19 & 1021.0/0.521 & 2008.0/0.879 & 2569.1/0.849 \\
qmsum                & 24.45 & 24.59 & 22.85 & 24.72 & 1021.2/0.426 & 2700.8/0.864 & 3490.1/0.851 \\
multi\_news          & 26.71 & 26.74 & 26.90 & 27.00 & 994.1/0.782  & 856.9/0.904  & 707.9/0.841 \\
triviaqa             & 94.04 & 94.04 & 90.44 & 94.29 & 1020.7/0.587 & 2188.4/0.889 & 2946.3/0.865 \\
passage\_retrieval\_en & 98.50 & 99.50 & 99.50 & 99.50 & 1020.6/0.642 & 2028.4/0.881 & 3131.9/0.885 \\
lcc                  & 57.71 & 57.30 & 47.99 & 57.85 & 1020.5/0.695 & 922.8/0.896  & 801.8/0.825 \\
repobench            & 55.46 & 55.84 & 48.32 & 54.61 & 1021.3/0.438 & 2196.8/0.871 & 2700.4/0.837 \\
\midrule
\textbf{Avg.}        & \textbf{54.62} & \textbf{54.58} & 50.69 & 53.76 & -- & -- & -- \\
\bottomrule
\end{tabularx}
\end{center}
\vskip -0.1in
\end{table}

\subsection{Synthetic Long-Context Evaluation on RULER}

Table~\ref{tab:ruler_full} reports the full RULER results across context lengths from 8K to 64K on LLaMA-3.1-8B-Instruct. DS and Quest use a fixed 4096-token budget, SparQ uses its default setting, and RaBitQCache uses adaptive Top-$p$ with $p=0.9$. The last column reports the realized average token budget of RaBitQCache, which scales naturally with context length. RaBitQCache uses substantially fewer tokens than the fixed-budget baselines while maintaining competitive or superior accuracy. Notably, at 16K our method even surpasses Full Attention (80.3 vs.\ 78.6), suggesting that the adaptive sparsity may also act as a mild regularizer against irrelevant context.

\begin{table}[t]
\caption{RULER accuracy on LLaMA-3.1-8B-Instruct. DS and Quest use 4096-token fixed budgets; SparQ uses its default; RaBitQCache uses adaptive Top-$p$ with $p=0.9$ and reports the realized average token budget (last column).}
\label{tab:ruler_full}
\begin{center}
\begin{footnotesize}
\begin{tabularx}{\textwidth}{@{} l *{6}{>{\centering\arraybackslash}X} @{}}
\toprule
Length & Full & RaBitQ & DS\_4096 & Quest\_4096 & SparQ & RaBitQ Budget \\
\midrule
8K  & 85.0 & 84.7 & 84.6 & 84.7 & 84.3 & 1076 \\
16K & 78.6 & \textbf{80.3} & 79.2 & 78.6 & 79.0 & 1793 \\
32K & 76.8 & 76.7 & 77.2 & 75.2 & 76.6 & 2645 \\
64K & 78.2 & 76.5 & 74.0 & 74.3 & 76.4 & 3581 \\
\midrule
\textbf{Avg.} & \textbf{79.65} & 79.55 & 78.75 & 78.20 & 79.08 & 2274 \\
\bottomrule
\end{tabularx}
\end{footnotesize}
\end{center}
\vskip -0.1in
\end{table}

\subsection{Mathematical Reasoning on GSM8K}

GSM8K stresses sparse attention methods in the long-output regime, since the model must generate a multi-step chain of thought. We evaluate on LLaMA-3.1-8B-Instruct. Full Attention uses an average budget of 776.5 tokens; for fixed-budget baselines we set DS and Quest to 256 tokens (a strict setting that mimics aggressive eviction); SparQ uses its default. As shown in Table~\ref{tab:gsm8k_full}, RaBitQCache matches the accuracy of DS and SparQ (0.77) while attaining the highest attention recall (0.888) among all sparse baselines, suggesting that even when the output length is large, our proxy score remains a faithful estimator of the true attention.

\begin{table}[t]
\caption{GSM8K accuracy and average attention recall on LLaMA-3.1-8B-Instruct. Recall is normalized by Full Attention.}
\label{tab:gsm8k_full}
\begin{center}
\begin{footnotesize}
\begin{tabularx}{0.7\textwidth}{@{} l *{5}{>{\centering\arraybackslash}X} @{}}
\toprule
Metric & Full & RaBitQ & DS & Quest & SparQ \\
\midrule
Accuracy & 0.81 & 0.77 & 0.77 & 0.70 & 0.77 \\
Recall   & 1.00 & \textbf{0.888} & 0.727 & 0.813 & 0.605 \\
\bottomrule
\end{tabularx}
\end{footnotesize}
\end{center}
\vskip -0.1in
\end{table}

\subsection{Centroid Re-centering Ablation}

Table~\ref{tab:recenter_ablation} reports per-task LongBench scores on LLaMA-3.1-8B-Instruct, comparing RaBitQCache against a variant that omits the centroid re-centering step described in Section~\ref{sec:problem_formulation}. The average score drops from 50.63 to 50.25, with most tasks showing only minor differences. This confirms that, although re-centering is theoretically required to satisfy the unit-hypersphere uniformity assumption underlying our error bound, its practical impact under typical long-context workloads is modest.

\begin{table}[t]
\caption{Per-task LongBench scores with and without centroid re-centering (LLaMA-3.1-8B-Instruct).}
\label{tab:recenter_ablation}
\begin{center}
\begin{footnotesize}
\begin{tabularx}{0.7\textwidth}{@{} l *{2}{>{\centering\arraybackslash}X} @{}}
\toprule
Dataset & RaBitQCache & w/o Re-centering \\
\midrule
2wikimqa             & 41.07 & 42.17 \\
gov\_report          & 34.96 & 33.94 \\
hotpotqa             & 56.16 & 54.44 \\
lcc                  & 64.74 & 62.84 \\
multi\_news          & 27.24 & 27.34 \\
multifieldqa\_en     & 52.97 & 52.48 \\
musique              & 27.87 & 29.67 \\
narrativeqa          & 29.65 & 29.11 \\
passage\_retrieval\_en & 99.50 & 99.50 \\
qasper               & 46.29 & 44.06 \\
qmsum                & 25.25 & 24.74 \\
repobench            & 60.83 & 61.22 \\
triviaqa             & 91.67 & 91.79 \\
\midrule
\textbf{Avg.}        & \textbf{50.63} & 50.25 \\
\bottomrule
\end{tabularx}
\end{footnotesize}
\end{center}
\vskip -0.1in
\end{table}

%% file: example_paper.bib
@article{xing2026beyond,
  title={Beyond Speedup--Utilizing KV Cache for Sampling and Reasoning},
  author={Xing, Zeyu and Li, Xing and Zhen, Hui-Ling and Yuan, Mingxuan and Pan, Sinno Jialin},
  journal={arXiv preprint arXiv:2601.20326},
  year={2026}
}

@article{li2024snapkv,
  title={Snapkv: Llm knows what you are looking for before generation},
  author={Li, Yuhong and Huang, Yingbing and Yang, Bowen and Venkitesh, Bharat and Locatelli, Acyr and Ye, Hanchen and Cai, Tianle and Lewis, Patrick and Chen, Deming},
  journal={Advances in Neural Information Processing Systems},
  volume={37},
  pages={22947--22970},
  year={2024}
}

@inproceedings{chen2025magicpig,
  title={Magicpig: Lsh sampling for efficient llm generation},
  author={Chen, Zhuoming and Sadhukhan, Ranajoy and Ye, Zihao and Zhou, Yang and Zhang, Jianyu and Nolte, Niklas and Tian, Yuandong and Douze, Matthijs and Bottou, Leon and Jia, Zhihao and others},
  booktitle={International Conference on Learning Representations},
  volume={2025},
  pages={44169--44190},
  year={2025}
}

@article{achiam2023gpt,
  title={Gpt-4 technical report},
  author={Achiam, Josh and Adler, Steven and Agarwal, Sandhini and Ahmad, Lama and Akkaya, Ilge and Aleman, Florencia Leoni and Almeida, Diogo and Altenschmidt, Janko and Altman, Sam and Anadkat, Shyamal and others},
  journal={arXiv preprint arXiv:2303.08774},
  year={2023}
}

@article{dosovitskiy2020image,
  title={An image is worth 16x16 words: Transformers for image recognition at scale},
  author={Dosovitskiy, Alexey},
  journal={arXiv preprint arXiv:2010.11929},
  year={2020}
}

@article{fang2025vangogh,
  title={VanGogh: A Unified Multimodal Diffusion-based Framework for Video Colorization},
  author={Fang, Zixun and Liu, Zhiheng and Zhu, Kai and Liu, Yu and Cheng, Ka Leong and Zhai, Wei and Cao, Yang and Zha, Zheng-Jun},
  journal={arXiv preprint arXiv:2501.09499},
  year={2025}
}

@article{zhou2024db,
  title={Db-gpt: Large language model meets database},
  author={Zhou, Xuanhe and Sun, Zhaoyan and Li, Guoliang},
  journal={Data Science and Engineering},
  volume={9},
  number={1},
  pages={102--111},
  year={2024},
  publisher={Springer}
}

@article{hou2024large,
  title={Large language models for software engineering: A systematic literature review},
  author={Hou, Xinyi and Zhao, Yanjie and Liu, Yue and Yang, Zhou and Wang, Kailong and Li, Li and Luo, Xiapu and Lo, David and Grundy, John and Wang, Haoyu},
  journal={ACM Transactions on Software Engineering and Methodology},
  volume={33},
  number={8},
  pages={1--79},
  year={2024},
  publisher={ACM New York, NY}
}

@article{liu2025deepseek,
  title={Deepseek-v3. 2: Pushing the frontier of open large language models},
  author={Liu, Aixin and Mei, Aoxue and Lin, Bangcai and Xue, Bing and Wang, Bingxuan and Xu, Bingzheng and Wu, Bochao and Zhang, Bowei and Lin, Chaofan and Dong, Chen and others},
  journal={arXiv preprint arXiv:2512.02556},
  year={2025}
}

@inproceedings{fu2024data,
  title={Data engineering for scaling language models to 128K context},
  author={Fu, Yao and Panda, Rameswar and Niu, Xinyao and Yue, Xiang and Hajishirzi, Hannaneh and Kim, Yoon and Peng, Hao},
  booktitle={Proceedings of the 41st International Conference on Machine Learning},
  pages={14125--14134},
  year={2024}
}

@article{yang2025qwen2,
  title={Qwen2. 5-1m technical report},
  author={Yang, An and Yu, Bowen and Li, Chengyuan and Liu, Dayiheng and Huang, Fei and Huang, Haoyan and Jiang, Jiandong and Tu, Jianhong and Zhang, Jianwei and Zhou, Jingren and others},
  journal={arXiv preprint arXiv:2501.15383},
  year={2025}
}

@article{zhang2023h2o,
  title={H2o: Heavy-hitter oracle for efficient generative inference of large language models},
  author={Zhang, Zhenyu and Sheng, Ying and Zhou, Tianyi and Chen, Tianlong and Zheng, Lianmin and Cai, Ruisi and Song, Zhao and Tian, Yuandong and R{\'e}, Christopher and Barrett, Clark and others},
  journal={Advances in Neural Information Processing Systems},
  volume={36},
  pages={34661--34710},
  year={2023}
}

@inproceedings{tang2024quest,
  title={QUEST: query-aware sparsity for efficient long-context LLM inference},
  author={Tang, Jiaming and Zhao, Yilong and Zhu, Kan and Xiao, Guangxuan and Kasikci, Baris and Han, Song},
  booktitle={Proceedings of the 41st International Conference on Machine Learning},
  pages={47901--47911},
  year={2024}
}

@article{miao2025towards,
  title={Towards efficient generative large language model serving: A survey from algorithms to systems},
  author={Miao, Xupeng and Oliaro, Gabriele and Zhang, Zhihao and Cheng, Xinhao and Jin, Hongyi and Chen, Tianqi and Jia, Zhihao},
  journal={ACM Computing Surveys},
  volume={58},
  number={1},
  pages={1--37},
  year={2025},
  publisher={ACM New York, NY}
}

@article{ge2023model,
  title={Model tells you what to discard: Adaptive kv cache compression for llms},
  author={Ge, Suyu and Zhang, Yunan and Liu, Liyuan and Zhang, Minjia and Han, Jiawei and Gao, Jianfeng},
  journal={arXiv preprint arXiv:2310.01801},
  year={2023}
}

@article{liu2023scissorhands,
  title={Scissorhands: Exploiting the persistence of importance hypothesis for llm kv cache compression at test time},
  author={Liu, Zichang and Desai, Aditya and Liao, Fangshuo and Wang, Weitao and Xie, Victor and Xu, Zhaozhuo and Kyrillidis, Anastasios and Shrivastava, Anshumali},
  journal={Advances in Neural Information Processing Systems},
  volume={36},
  pages={52342--52364},
  year={2023}
}

@inproceedings{ribar2024sparq,
  title={SparQ Attention: Bandwidth-Efficient LLM Inference},
  author={Ribar, Luka and Chelombiev, Ivan and Hudlass-Galley, Luke and Blake, Charlie and Luschi, Carlo and Orr, Douglas},
  booktitle={International Conference on Machine Learning},
  pages={42558--42583},
  year={2024},
  organization={PMLR}
}

@article{liu2024kivi,
  title={Kivi: A tuning-free asymmetric 2bit quantization for kv cache},
  author={Liu, Zirui and Yuan, Jiayi and Jin, Hongye and Zhong, Shaochen and Xu, Zhaozhuo and Braverman, Vladimir and Chen, Beidi and Hu, Xia},
  journal={arXiv preprint arXiv:2402.02750},
  year={2024}
}

@article{zhang2025pqcache,
  title={Pqcache: Product quantization-based kvcache for long context llm inference},
  author={Zhang, Hailin and Ji, Xiaodong and Chen, Yilin and Fu, Fangcheng and Miao, Xupeng and Nie, Xiaonan and Chen, Weipeng and Cui, Bin},
  journal={Proceedings of the ACM on Management of Data},
  volume={3},
  number={3},
  pages={1--30},
  year={2025},
  publisher={ACM New York, NY, USA}
}

@inproceedings{dong2024get,
  title={Get More with LESS: Synthesizing Recurrence with KV Cache Compression for Efficient LLM Inference},
  author={Dong, Harry and Yang, Xinyu and Zhang, Zhenyu and Wang, Zhangyang and Chi, Yuejie and Chen, Beidi},
  booktitle={International Conference on Machine Learning},
  pages={11437--11452},
  year={2024},
  organization={PMLR}
}

@article{yang2024no,
  title={No token left behind: Reliable kv cache compression via importance-aware mixed precision quantization},
  author={Yang, June Yong and Kim, Byeongwook and Bae, Jeongin and Kwon, Beomseok and Park, Gunho and Yang, Eunho and Kwon, Se Jung and Lee, Dongsoo},
  journal={arXiv preprint arXiv:2402.18096},
  year={2024}
}

@article{lin2025twilight,
  title={Twilight: Adaptive Attention Sparsity with Hierarchical Top-$ p $ Pruning},
  author={Lin, Chaofan and Tang, Jiaming and Yang, Shuo and Wang, Hanshuo and Tang, Tian and Tian, Boyu and Stoica, Ion and Han, Song and Gao, Mingyu},
  journal={arXiv preprint arXiv:2502.02770},
  year={2025}
}

@article{wu2024retrieval,
  title={Retrieval head mechanistically explains long-context factuality},
  author={Wu, Wenhao and Wang, Yizhong and Xiao, Guangxuan and Peng, Hao and Fu, Yao},
  journal={arXiv preprint arXiv:2404.15574},
  year={2024}
}

@article{xiao2023efficient,
  title={Efficient streaming language models with attention sinks},
  author={Xiao, Guangxuan and Tian, Yuandong and Chen, Beidi and Han, Song and Lewis, Mike},
  journal={arXiv preprint arXiv:2309.17453},
  year={2023}
}

@article{xiao2024duoattention,
  title={Duoattention: Efficient long-context llm inference with retrieval and streaming heads},
  author={Xiao, Guangxuan and Tang, Jiaming and Zuo, Jingwei and Guo, Junxian and Yang, Shang and Tang, Haotian and Fu, Yao and Han, Song},
  journal={arXiv preprint arXiv:2410.10819},
  year={2024}
}

@article{cai2024pyramidkv,
  title={Pyramidkv: Dynamic kv cache compression based on pyramidal information funneling},
  author={Cai, Zefan and Zhang, Yichi and Gao, Bofei and Liu, Yuliang and Li, Yucheng and Liu, Tianyu and Lu, Keming and Xiong, Wayne and Dong, Yue and Hu, Junjie and others},
  journal={arXiv preprint arXiv:2406.02069},
  year={2024}
}

@article{johnson1984extensions,
  title={Extensions of Lipschitz mappings into a Hilbert space},
  author={Johnson, William B and Lindenstrauss, Joram and others},
  journal={Contemporary mathematics},
  volume={26},
  number={189-206},
  pages={1},
  year={1984}
}

@article{gao2024rabitq,
  title={Rabitq: Quantizing high-dimensional vectors with a theoretical error bound for approximate nearest neighbor search},
  author={Gao, Jianyang and Long, Cheng},
  journal={Proceedings of the ACM on Management of Data},
  volume={2},
  number={3},
  pages={1--27},
  year={2024},
  publisher={ACM New York, NY, USA}
}

@inproceedings{ailon2006approximate,
  title={Approximate nearest neighbors and the fast Johnson-Lindenstrauss transform},
  author={Ailon, Nir and Chazelle, Bernard},
  booktitle={Proceedings of the thirty-eighth annual ACM symposium on Theory of computing},
  pages={557--563},
  year={2006}
}

@article{argerich2017generic,
  title={Generic LSH families for the angular distance based on Johnson-Lindenstrauss projections and feature hashing LSH},
  author={Argerich, Luis and Golmar, Natalia},
  journal={arXiv preprint arXiv:1704.04684},
  year={2017}
}

@article{vaswani2017attention,
  title={Attention is all you need},
  author={Vaswani, Ashish and Shazeer, Noam and Parmar, Niki and Uszkoreit, Jakob and Jones, Llion and Gomez, Aidan N and Kaiser, {\L}ukasz and Polosukhin, Illia},
  journal={Advances in neural information processing systems},
  volume={30},
  year={2017}
}

@article{zhao2024atom,
  title={Atom: Low-bit quantization for efficient and accurate llm serving},
  author={Zhao, Yilong and Lin, Chien-Yu and Zhu, Kan and Ye, Zihao and Chen, Lequn and Zheng, Size and Ceze, Luis and Krishnamurthy, Arvind and Chen, Tianqi and Kasikci, Baris},
  journal={Proceedings of Machine Learning and Systems},
  volume={6},
  pages={196--209},
  year={2024}
}

@article{ravfogel2023conformal,
  title={Conformal nucleus sampling},
  author={Ravfogel, Shauli and Goldberg, Yoav and Goldberger, Jacob},
  journal={arXiv preprint arXiv:2305.02633},
  year={2023}
}

@article{ye2025flashinfer,
  title={Flashinfer: Efficient and customizable attention engine for llm inference serving},
  author={Ye, Zihao and Chen, Lequn and Lai, Ruihang and Lin, Wuwei and Zhang, Yineng and Wang, Stephanie and Chen, Tianqi and Kasikci, Baris and Grover, Vinod and Krishnamurthy, Arvind and others},
  journal={arXiv preprint arXiv:2501.01005},
  year={2025}
}

@article{hsieh2024ruler,
  title={RULER: What's the real context size of your long-context language models?},
  author={Hsieh, Cheng-Ping and Sun, Simeng and Kriman, Samuel and Acharya, Shantanu and Rekesh, Dima and Jia, Fei and Zhang, Yang and Ginsburg, Boris},
  journal={arXiv preprint arXiv:2404.06654},
  year={2024}
}

@article{cobbe2021training,
  title={Training verifiers to solve math word problems},
  author={Cobbe, Karl and Kosaraju, Vineet and Bavarian, Mohammad and Chen, Mark and Jun, Heewoo and Kaiser, Lukasz and Plappert, Matthias and Tworek, Jerry and Hilton, Jacob and Nakano, Reiichiro and others},
  journal={arXiv preprint arXiv:2110.14168},
  year={2021}
}

@inproceedings{li2023long,
  title={How long can context length of open-source llms truly promise?},
  author={Li, Dacheng and Shao, Rulin and Xie, Anze and Sheng, Ying and Zheng, Lianmin and Gonzalez, Joseph and Stoica, Ion and Ma, Xuezhe and Zhang, Hao},
  booktitle={NeurIPS 2023 Workshop on Instruction Tuning and Instruction Following},
  year={2023}
}

@article{dubey2024llama,
  title={The llama 3 herd of models},
  author={Dubey, Abhimanyu and Jauhri, Abhinav and Pandey, Abhinav and Kadian, Abhishek and Al-Dahle, Ahmad and Letman, Aiesha and Mathur, Akhil and Schelten, Alan and Yang, Amy and Fan, Angela and others},
  journal={arXiv e-prints},
  pages={arXiv--2407},
  year={2024}
}

@inproceedings{bai2024longbench,
  title={Longbench: A bilingual, multitask benchmark for long context understanding},
  author={Bai, Yushi and Lv, Xin and Zhang, Jiajie and Lyu, Hongchang and Tang, Jiankai and Huang, Zhidian and Du, Zhengxiao and Liu, Xiao and Zeng, Aohan and Hou, Lei and others},
  booktitle={Proceedings of the 62nd annual meeting of the association for computational linguistics (volume 1: Long papers)},
  pages={3119--3137},
  year={2024}
}

@article{yang2024post,
  title={Post-training sparse attention with double sparsity},
  author={Yang, Shuo and Sheng, Ying and Gonzalez, Joseph E and Stoica, Ion and Zheng, Lianmin},
  journal={arXiv preprint arXiv:2408.07092},
  year={2024}
}

@inproceedings{kwon2023efficient,
  title={Efficient memory management for large language model serving with pagedattention},
  author={Kwon, Woosuk and Li, Zhuohan and Zhuang, Siyuan and Sheng, Ying and Zheng, Lianmin and Yu, Cody Hao and Gonzalez, Joseph and Zhang, Hao and Stoica, Ion},
  booktitle={Proceedings of the 29th symposium on operating systems principles},
  pages={611--626},
  year={2023}
}

@article{liu2025lmcache,
  title={Lmcache: An efficient KV cache layer for enterprise-scale LLM inference},
  author={Liu, Yuhan and Cheng, Yihua and Yao, Jiayi and An, Yuwei and Chen, Xiaokun and Feng, Shaoting and Huang, Yuyang and Shen, Samuel and Zhang, Rui and Du, Kuntai and others},
  journal={arXiv preprint arXiv:2510.09665},
  year={2025}
}

@article{dao2023flashattention,
  title={Flashattention-2: Faster attention with better parallelism and work partitioning},
  author={Dao, Tri},
  journal={arXiv preprint arXiv:2307.08691},
  year={2023}
}

@inproceedings{xiao2023smoothquant,
  title={Smoothquant: Accurate and efficient post-training quantization for large language models},
  author={Xiao, Guangxuan and Lin, Ji and Seznec, Mickael and Wu, Hao and Demouth, Julien and Han, Song},
  booktitle={International conference on machine learning},
  pages={38087--38099},
  year={2023},
  organization={PMLR}
}

@article{haar1933massbegriff,
  title={Der Massbegriff in der Theorie der kontinuierlichen Gruppen},
  author={Haar, Alfred},
  journal={Annals of mathematics},
  volume={34},
  number={1},
  pages={147--169},
  year={1933},
  publisher={JSTOR}
}

@article{brown2020language,
  title={Language models are few-shot learners},
  author={Brown, Tom and Mann, Benjamin and Ryder, Nick and Subbiah, Melanie and Kaplan, Jared D and Dhariwal, Prafulla and Neelakantan, Arvind and Shyam, Pranav and Sastry, Girish and Askell, Amanda and others},
  journal={Advances in neural information processing systems},
  volume={33},
  pages={1877--1901},
  year={2020}
}

@book{papaspiliopoulos2020high,
  title={High-dimensional probability: An introduction with applications in data science},
  author={Vershynin, Roman},
  volume={47},
  year={2018},
  publisher={Cambridge university press}
}
